\renewcommand\AB@affilsepx{ \protect\Affilfont}
\newcommand*{\thead}[1]{%
\multicolumn{1}{c}{\bfseries\begin{tabular}{@{}c@{}}#1\end{tabular}}}
\newtheorem{defn}{Definition}[section]
\newtheorem{thm}{Theorem}[section]
\newcommand{\nocontentsline}[3]{}
\newcommand{\tocless}[2]{\bgroup\let\addcontentsline=\nocontentsline#1{#2}\egroup}
\ificcvfinal\pagestyle{empty}\fi
\begin{document}

\title{Building a Winning Team: Selecting Source Model Ensembles using a Submodular Transferability Estimation Approach}


\makeatletter
\newcommand\email[2][]%
   {\newaffiltrue\let\AB@blk@and\AB@pand
      \if\relax#1\relax\def\AB@note{\AB@thenote}\else\def\AB@note{\relax}%
        \setcounter{Maxaffil}{0}\fi
      \begingroup
        \let\protect\@unexpandable@protect
        \def\thanks{\protect\thanks}\def\footnote{\protect\footnote}%
        \@temptokena=\expandafter{\AB@authors}%
        {\def\\{\protect\\\protect\Affilfont}\xdef\AB@temp{#2}}%
         \xdef\AB@authors{\the\@temptokena\AB@las\AB@au@str
         \protect\\[\affilsep]\protect\Affilfont\AB@temp}%
         \gdef\AB@las{}\gdef\AB@au@str{}%
        {\def\\{, \ignorespaces}\xdef\AB@temp{#2}}%
        \@temptokena=\expandafter{\AB@affillist}%
        \xdef\AB@affillist{\the\@temptokena \AB@affilsep
          \AB@affilnote{}\protect\Affilfont\AB@temp}%
      \endgroup
       \let\AB@affilsep\AB@affilsepx
}
\makeatother

\author[1]{Vimal K B*}
\author[1,3]{Saketh Bachu*}
\author[1]{Tanmay Garg}
\author[2]{Niveditha Lakshmi Narasimhan}
\author[2]{Raghavan Konuru}
\author[1]{Vineeth N Balasubramanian}
\affil[1]{Indian Institute of Technology, Hyderabad } 
\affil[2]{KLA }
\affil[3]{University of California, Riverside \newline}
\email{* Equal Contribution. Corresponding author: \nolinkurl{vimalkb96@gmail.com}}


\maketitle
\ificcvfinal\thispagestyle{empty}\fi

\begin{abstract}
Estimating the transferability of publicly available pre-trained models to a target task has assumed an important place for transfer learning tasks in recent years. Existing efforts propose metrics that allow a user to choose one model from a pool of pre-trained models without having to fine-tune each model individually and identify one explicitly. With the growth in the number of available pre-trained models and the popularity of model ensembles, it also becomes essential to study the transferability of multiple-source models for a given target task. The few existing efforts study transferability in such multi-source ensemble settings using just the outputs of the classification layer and neglect possible domain or task mismatch. Moreover, they overlook the most important factor while selecting the source models, viz., the cohesiveness factor between them, which can impact the performance and confidence in the prediction of the ensemble. To address these gaps, we propose a novel Optimal tranSport-based suBmOdular tRaNsferability metric (OSBORN) to estimate the transferability of an ensemble of models to a downstream task. OSBORN collectively accounts for image domain difference, task difference, and cohesiveness of models in the ensemble to provide reliable estimates of transferability. We gauge the performance of OSBORN on both image classification and semantic segmentation tasks. Our setup includes 28 source datasets, 11 target datasets, 5 model architectures, and 2 pre-training methods. We benchmark our method against current state-of-the-art metrics MS-LEEP and E-LEEP, and outperform them consistently using the proposed approach.
\end{abstract}

\vspace{-3pt}
\tocless\section{Introduction}
\vspace{-2pt}

\begin{figure*}
  \centering
\includegraphics[width=\textwidth]
{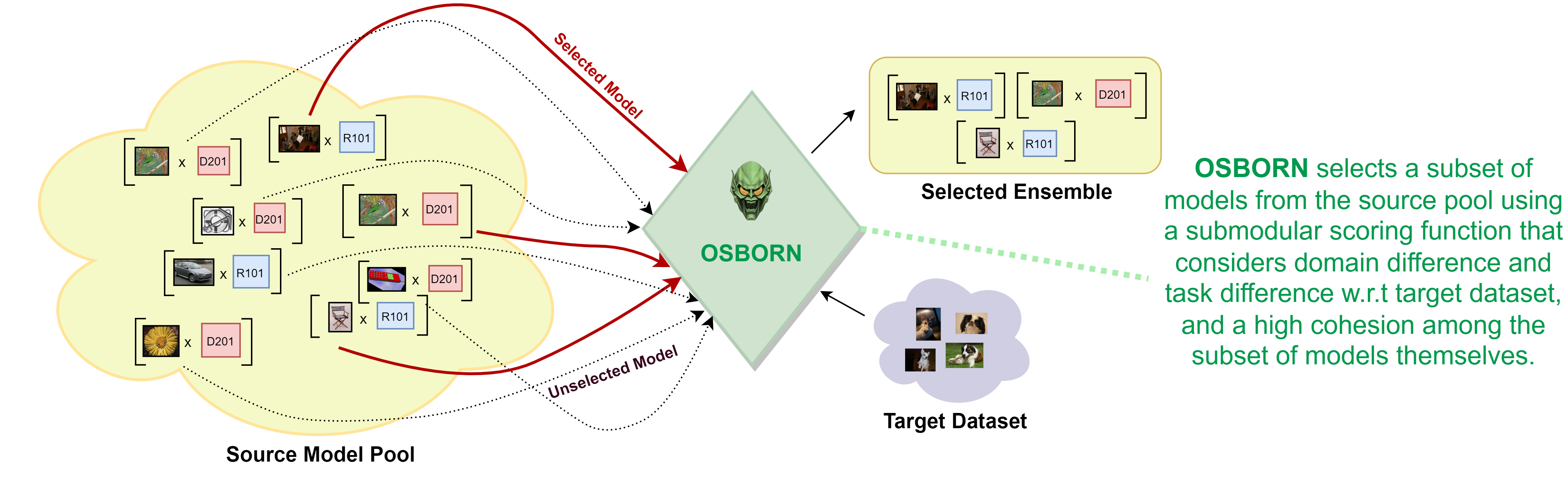}
  \caption{Illustration of the objective and problem setting of our proposed metric. (\textit{Trivia:} OSBORN is also the main antagonist in the Spider-Man movie (2002), hence the emoji.)}
  \label{fig:osborn_overview}
\end{figure*}

In computer vision, transfer learning is a go-to strategy to train Deep Neural Networks (DNNs) on newer domains and datasets across tasks such as image classification \cite{alexnet, resnet}, image segmentation \cite{semseglong, ppn} and object detection \cite{rcnn, rcnnfaster}. This widespread usage is due to the easy availability of a large pool of open-sourced pre-trained models (trained on large-scale datasets such as ImageNet \cite{imagenet, pretrain1}), which, when fine-tuned, achieve faster convergence and better performance than training from scratch. However, every time a user wants to employ transfer learning, the question that has increasingly grown relevant with an increased number of source models is: “Which combination of dataset and architecture should I pick to fine-tune to achieve the best performance on my target dataset?”. To solve this, we need a tool that helps us choose a source or set of source models, which require minimal fine-tuning and achieves maximal performance. 

\textbf{Transferability estimation} (TE) metrics have been proposed in recent years to tackle this problem \cite{nce, leep, logme, otce, gbc}. With these metrics, a particular source model can be selected without conducting expensive fine-tuning of all available source models on the target training set. Most efforts in this direction are, however limited by their capability of selecting only a single source model, thus restricting their use in an ensemble learning setting. There has been only one work so far \cite{eleep} which extends an existing single-source transferability estimation method \cite{leep} to an ensemble setting. While this work showed promising results, it did not consider the similarity between source and target datasets in the latent representation space, or account for the relationships between individual models in the ensemble. This problem space remains nascent at this time, necessitating more efforts to estimate transferability reliably in different conditions.

Ensemble models have been popular for a few decades now in machine learning \cite{adaboost, Breimanbagging, violajones}.
Ensemble models are known to increase task accuracy, decrease overall predictive variance and increase robustness against out-of-distribution data samples \cite{deepensemble1}. Recent efforts have shown the usefulness of ensembles of pre-trained models \cite{wisdom-of-committees2022iclr}, especially considering the widespread availability of pre-trained models in the community \cite{pytorch}. The problem of estimating transferability for a model ensemble from a large source model pool becomes even more relevant in this context.

In this work, we introduce a novel transferability estimation metric specifically designed for ensemble selection called Optimal Transport-based Submodular Transferability metric (OSBORN). As stated earlier, a recent effort in this direction \cite{eleep} showed promising results for such a score, but focused on individual model's performance (via the classifier's outputs) and did not consider the feature (latent representation) space mismatch, or how these models interact with each other in the ensemble. To address this, OSBORN measures the latent space mismatch between the source and the target datasets (domain difference) in addition to the mismatch in the classifier's outputs (task difference). Also, to account for the interaction between models in the ensemble, 
we introduce a novel model cohesion term, which captures the mutual cooperation between models towards forming an ensemble. Cohesion is required to ensure that individual models in an ensemble are in agreement with each other in terms of predictions (and not voting out each other). Thus, in this work, we propose a domain, task and cohesion-aware transferability estimator for ensemble selection from a source pool of multiple models.


Beyond bringing the abovementioned factors into transferability estimation for ensembles, we show that the proposed score can be viewed as a submodular set function \cite{bach2011submodularity}. This allows us to follow a greedy maximization strategy, which is known to provide a high-quality solution for the problem based on well-known theoretical guarantees \cite{nemhauser}. 
We thus select cohesive and closely related models for a particular target dataset. To evaluate our metric, we conduct extensive experiments using 28 source datasets, 11 target datasets, and 5 model architectures. In downstream tasks, we consider fully-supervised pre-training-based image classification, self-supervised pre-training-based image classification, semantic segmentation as well as domain adaptation. Table \ref{tab:exp_setup} presents an overview of our experiment breadth, as compared to other recent efforts on this problem. In particular, to the best of our knowledge, we are the first to perform transferability estimation of ensembles for image classification and domain adaptation tasks.

To summarize, we make the following contributions: (1) We introduce a novel transferability estimation metric for ensemble selection that considers domain similarity, task similarity and inter-model cohesion in its design; (2) 
We show that viewing the proposed metric as a submodular set function allows us to use a simple greedy maximization strategy to select a source model ensemble for a given target dataset; 
(3) We study the performance of our metric across a wide range of downstream tasks and model pools;
(4) We evaluate the reliability of our metric using different correlation metrics in our studies, and also carry out additional analysis and ablation studies to study its usefulness. We outperform earlier methods by a margin of $58.62\%$, $66.06\%$, and $96.36\%$ in terms of Pearson Correlation Coefficient (PCC), Kendall $\tau$ (KT) \cite{kendaltau} and Weighted Kendall $\tau$ (WKT) \cite{weightedkendall} for the image classification task. \footnote{Project page: \href{https://vimalkb007.github.io/OSBORN/}{https://vimalkb007.github.io/OSBORN/}}

\begin{table}
    \centering
    \scalebox{0.75}{
    \begin{tabular}{l|cccc}
    \toprule
    & \multicolumn{4}{c}{\textbf{Single Source TE}} \\ 
     \midrule
     & \thead{Classification} & \thead{Segmentation} & \thead{DA Classification} & \\ 
        \midrule
        \# LEEP \cite{leep} & \checkmark & $\times$ & $\times$\\
        \# LogME \cite{logme} & \checkmark & $\times$ & $\times$\\
        \# OTCE \cite{otce} & $\times$ & $\times$ & \checkmark\\
    \midrule
     & \multicolumn{4}{c}{\textbf{Multi Source TE}} \\ 
     \midrule
        \# MS-LEEP \cite{eleep} & $\times$ & \checkmark & $\times$ \\
        \# Ours & \checkmark & \checkmark & \checkmark\\
    \bottomrule
    \end{tabular}
    }
    \vspace{2mm}
    \caption{Experimental settings studied by different methods in single-source TE and multi-source TE settings (DA: Domain Adaptation). We note the wide range of our experimental settings when compared to earlier work.}
    
    \label{tab:exp_setup}
\end{table}

\vspace{3mm}
\tocless\section{Related Work}

\noindent \textbf{Transfer Learning:} Over the years, transfer learning has been applied and explored across various fields \cite{bert, transfermedical, transferrl, transferrl2}, as well as across datasets, model architectures, and pre-training strategies \cite{tldatasets, sslbettertransfer, transferssl2}. These efforts have included the study of interesting and practical questions such as which particular layers are more transferable \cite{layerstl} or estimating the correlation between pre-training and fine-tuning performance \cite{plfltl}. Beyond finetuning of source models to target datasets, task transfer methods \cite{taskonomy, taskonomy2} have also studied relationships between visual tasks such as semantic segmentation, depth prediction and vanishing point prediction, or used attribution maps to relate such tasks \cite{deepattr1,deepattr2}. 
In contrast to the aforementioned methods, the objective of our work is dataset transferability estimation.



\noindent \textbf{Transferability Estimation Metrics (Single Source):}
\label{sec: rel_transfest_single_source}
As stated earlier, gauging transferability reduces the effort in finding an optimal source model for a particular target dataset because it averts the expensive fine-tuning process. In recent years, significant efforts have been made in this problem space, considering the relevance of this problem to practitioners. The H-Score was proposed \cite{h_score_paper} to measure the usefulness (in terms of discriminativeness) of pre-trained source models for the target task. While this method shows promising results as a pioneer work in this field, it misses considering the scenarios where the source and target data have different distributions. Subsequently, NCE \cite{nce}, and LEEP \cite{leep} developed methods that used the classifier outputs of pre-trained source models when the target dataset is forward-propagated through the model to estimate the log-likelihood of the target dataset. NCE largely focused on estimating transferability in scenarios where the source and target tasks share the same input data (e.g., face recognition and facial attribute classification). 
Subsequent methods such as LogME \cite{logme} also showed that likelihood methods might be 
prone to over-fitting. To tackle this, LogME \cite{logme} estimated the maximum value of label evidence (instead of maximum likelihood) given the feature set extracted by the pre-trained source models.
Considering the fact that previous methods largely relied on classifier outputs and their 
sub-optimal performance in practical scenarios like cross-domain settings, OTCE \cite{otce} proposed an optimal transport framework to compute domain difference (based on feature space) and task difference (based on label space) to estimate transferability. This method leveraged the source model's latent representations in addition to classifier outputs with no explicit assumptions on  the source and target datasets. All the above works are, however focused on estimating transferability from a single source model to a target dataset. 


\noindent \textbf{Transferability Estimation Metrics (Multi-Source Ensembles):}
\label{sec: rel_transfest_multi_source}
Agostinelli et al\cite{eleep} recently proposed the first work on extending transferability estimation to select source model ensembles in \cite{eleep}, specifically focused on semantic segmentation. 
This work extends LEEP \cite{leep} to ensembles, and shows promising results in the considered settings. Our work builds on this effort in multiple ways: (i) instead of solely relying on classifier outputs for estimating transferability \cite{leep, eleep, nce}, we also consider the domain mismatch in the latent feature representation space; (ii) beyond looking at the individual model's outputs in an ensemble, we also consider the interactions and correlation between the model outputs; (iii) we make no assumptions on the source and target data distributions; and (iv) while \cite{eleep} focused on segmentation, we show our method's results on classification, segmentation and domain adaptation tasks. We also show results on multiple pre-training strategies while previous works \cite{leep, logme, nce, otce} mostly focus on fully-supervised pre-training strategies. Our proposed metric can also be viewed as a submodular function, which allows us to leverage ranking-based greedy optimization strategies to make it efficient in practice. 

    

\noindent \textbf{Ensemble Learning.}
Learning ensembles of models has been popular in machine learning to increase overall task performance, decrease prediction variance, prevent over-fitting, and increase out-of-distribution robustness \cite{Breimanbagging, ensembleacc2, ensembleacc3, robustness2}. More recent efforts in training ensembles of neural network models have focused on speeding up their training \cite{ensemblefast, wisdom-of-committees2022iclr}, leveraging a single model's capacity to train multiple subnetworks whose predictions are ensembled to improve robustness \cite{subnetworks}, or studying mixture-of-experts paradigms which bring together thousands of subnetworks for large language models \cite{mixture_of_models}. We clarify that our work focuses rather on selecting model ensembles from a larger source model pool via estimating transferability without explicitly training ensembles themselves. One can view our work as a step before ensemble learning when there is a larger model pool and only few models can be ensembled. As stated in \cite{eleep}, this setting is commonly encountered by a practitioner in the real-world across application domains.



\vspace{3mm}
\tocless\section{Background and Preliminaries}
\label{sec: bg and prelims}

\noindent\textbf{Notations:} Given $M$ source datasets, we denote the $r^{th}$ source dataset as $D_{s^{r}} {=} \{{(x_{s^{r}}^{i}, y_{s^{r}}^{i})}\}_{i=1}^{n^{r}} \sim P_{s^{r}}(x, y)$ and target dataset as $D_{t} {=} \{{(x_{t}^{i}, y_{t}^{i})}\}_{i=1}^{m} \sim P_{t}(x, y)$ where, $x_{s^{r}}^{i} \in \mathcal{X}_{s^{r}}, x_{t}^{i} \in \mathcal{X}_{t}$, $y_{s^{r}}^{i} \in \mathcal{Y}_{s^{r}}$, and $y_{t}^{i} \in \mathcal{Y}_{t}$. Note that 
we do not restrict the label spaces $P(\mathcal{Y}_{s^{r}})$ and $P(\mathcal{Y}_{t})$ to span the same category set. We base our study on a domain-agnostic and task-agnostic setting. 

\noindent\textbf{Transferability Estimation for Ensembles:} For every source dataset $D_{s^{r}}$, we assume there exists a pre-trained model on that dataset denoted by $(\theta_{s^{r}}, h_{s^{r}})$ where $\theta$ is the feature extractor, and $h$ is the classifier head. $M$ represents the collection of such source models. As stated earlier, we focus on a multiple source model selection setting (i.e. ensembles) where our metric provides a transferability estimation (TE) score $\alpha^{M_e \rightarrow t}$ for a given subset of models $M_{e}$ from the source pool $M$. When correlated to the accuracy $A^{M_{e} \rightarrow t}$ (i.e. fine-tuned accuracy of the ensemble on the target test set), this TE score provides the reliability of the transferability estimate. Following \cite{eleep}, we calculate the ensemble accuracy by fine-tuning individual models in subset $M_{e}$ (both $\theta$ and $h$) on the target train set and averaging their predictions on the target test set.



\noindent\textbf{Submodularity in TE for Ensembles.} The main idea of \textbf{TE} involves choosing optimal source models for a given target dataset. Apart from performance \& computation trade-offs, a crucial motivation to select a subset of models is to mitigate risk of negative transfer. 
\begin{wrapfigure}[8]{r}{0.15\textwidth}
\vspace{-13pt}
  \centering
      \includegraphics[height=2.2cm,width=2.56cm]
{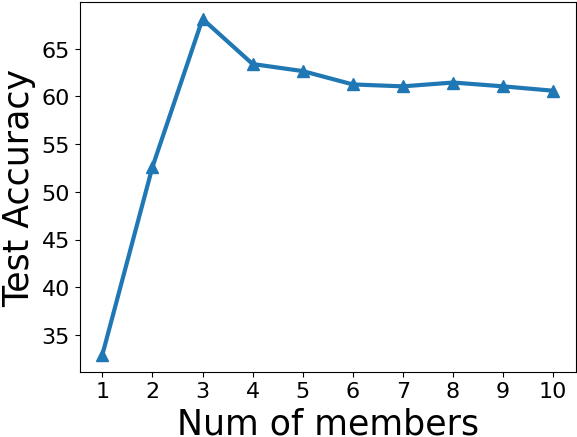}
  \caption{\scriptsize Test accuracies on Caltech101 with varying subsets of models (chosen randomly)}
  \label{fig:caltech101_ensemble}
\end{wrapfigure}
Fig \ref{fig:caltech101_ensemble} herein shows that opting for
all models in the ensemble could
lead to a decrease in overall per-
formance compared to selecting a
smaller set of models. This can
be due to the detrimental impact
of weak or non-transferable models
in the ensemble, highlighting the
importance of carefully combining models to ensure optimal performance. Further, finding an optimal ensemble for a given target dataset requires checking all possible combinations of different source models for a particular ensemble size. This exhaustive process is an NP-hard problem. In this paper, we propose a submodular approach to rank the available models in the source pool according to the performance gain they would yield if added to the subset pool of the ensemble and select the top $k$ models, where $k$ is the required size of the ensemble. While submodular subset selection is popular in different machine learning settings \cite{bach2011submodularity,subopt,wei2015icml}, to the best of our knowledge, this is the first such use for transferability estimation. To this end, we first formally define submodularity below.


\begin{defn}
\label{defn:diminishing_returns_submodular_def}
Let $\Omega$ be a set and $\mathcal{P}\left( \Omega \right)$ be the power set of $\Omega$, then a submodular function is a set function $\ensuremath{f} : \mathcal{P}\left( \Omega \right) \to \mathbb{R}$. The submodular function follows the property of diminishing returns, i.e. adding a new element to a smaller set produces a larger increase in $\ensuremath{f}$ compared to a larger set. Mathematically, if for all $\ensuremath{X}, \ensuremath{Y} \subseteq \Omega$, where $\ensuremath{X} \subseteq \ensuremath{Y}$ and for all $\ensuremath{v} \in \Omega\setminus\ensuremath{Y}$, the property follows:
\vspace{-1.5mm}
    \begin{equation}
     \ensuremath{f}(\ensuremath{X} + \ensuremath{v}) - \ensuremath{f}(\ensuremath{X}) \geq \ensuremath{f}(\ensuremath{Y} + \ensuremath{v}) - \ensuremath{f}(\ensuremath{Y})  
     \label{eq:subdiminishingreturns}
    \end{equation}
\end{defn}

A key benefit of posing a problem as one of submodular subset selection is that a greedy approach can be leveraged to efficiently identify a solution of required subset size that is reasonably close to the optimal solution. Nemhauser \cite{nemhauser} showed that the quality of the subset chosen greedily cannot be worse than $1 - e^{-1}$ of the optimal value. This makes submodularity an attractive approach for usage in the field of TE for ensembles as we can rank the models in the source pool and select an ensemble of desired size. Further details on how to greedily select the models are discussed later in this paper.



\noindent\textbf{Evaluation Criteria.}
As stated earlier, the reliability of a \textbf{TE} method is obtained by measuring the correlation between $\alpha^{M_{e} \rightarrow t}$ and  $A^{M_{e} \rightarrow t}$. Previous works \cite{logme, leep, eleep, otce, nce} measure this correlation using different techniques such as Pearson Correlation Coefficient (PCC), Kendall $\tau$ (KT) \cite{kendaltau} and Weighted Kendall $\tau$ (WKT) \cite{weightedkendall}. We report results for all these correlation measures to be comprehensive in our analysis.

\tocless\section{OSBORN: Transferability Estimation Metric for Model Ensemble Selection}
\label{sec:prop_metric}


In order to design a reliable transferability estimation approach for model ensembles, we propose the Optimal Transport-based Submodular Transferability metric (OSBORN), which considers three factors: domain difference, task difference, and inter-model cohesion. Inspired by earlier efforts on single-source transferability estimation~\cite{otce}, we consider both classifier output and distance in the latent representation space in our approach. Besides, since our focus is on model ensembles, we consider inter-model relationships in this metric. We now describe each of these quantities.


\vspace{3pt}
\noindent \underline{\textit{Minimize Domain Difference ($W_D$).}} 
In order to minimize the latent space mismatch between the source and target datasets, similar to \cite{otce}, we choose Wasserstein distance and Optimal Transport (OT) to compute this mismatch owing to its advantages in capturing the geometries of underlying data. Mathematically, the p-Wasserstein distance is given as follows: 
\vspace{-1.5mm}
\begin{equation}
    W_{p}\left( \beta, \gamma \right) = \left(\inf_{\pi \in \Pi(\beta, \gamma)} \int D(x, z)^{p}d\pi(x,z)\right)^{1/p}
    \label{eq:p-wass}
\end{equation}
\noindent where, $p \geq 1$, $\beta, \gamma$ are continuous or discrete random variables in a complete and separable space ${S}$, $D(.,.) :$  $S \times S \rightarrow \mathbb{R}^{+}$ is a distance or a cost function between two points $x$ and $z$, $\pi(\beta, \gamma)$ is the coupling matrix which can also be understood as the joint probability distributions with marginals $\beta$ and $\gamma$. 
In particular, in this work, we use the 1-Wasserstein distance, also called the Earth Mover Distance, to calculate the domain difference between source and target latents as:
\vspace{-1.5mm}
\begin{equation}
    W_{D}\left( \theta_{s}, x_{t} \right) = \sum_{i,j=1}^{m,n} || \theta_{s}(x_{s}^{i}) - \theta_{s}(x_{t}^{j})||_{2}^{2}\pi_{ij}^{*},
    \label{eq:WD}
\end{equation}
\noindent where $||\cdot-\cdot||_{2}^{2}$ is the distance or cost metric, $\pi^{*}$ is the optimal coupling matrix of size $m\times n$ obtained by solving the optimal transport (OT) problem using the Sinkhorn algorithm \cite{sinkhorn, otce}. Note that $\theta_{s}(.)$ is the feature extractor belonging to the source model. Intuitively, if the latent space of the source dataset is closely aligned with that of the target dataset, it is easier for the model to transfer.


\vspace{3pt}
\noindent \underline{\textit{Minimize Task Difference ($W_T$).}} In order to measure the difference between a source task and the given target task, we use the mismatch between the model/classifier's outputs for source and target data forward-propagated through the source model. We use the conditional entropy (CE) of the predicted labels $\hat{y_{t}} \in \mathcal{Y}_s$ of the target dataset samples given their ground truth labels $y_{t} \in \mathcal{Y}_t$. The predicted labels are obtained by forward-propagating the target samples $x_t$ through the corresponding source model $\theta_{s}$. Let $\hat{Y_t}$ be a random variable that takes values in the range of $\mathcal{Y}_{s}$; and $Y_t$ be a random variable that takes values in the range of $\mathcal{Y}_{t}$, then $W_T$ can be calculated as:
\vspace{-2mm}

\begin{align}
    &\begin{aligned}
      W_{T}\left( \theta_{s}, x_{t} \right) &= H(\hat{Y_t}|Y_t) \\ 
      \qquad &= -\sum_{\hat{y_{t}} \in \mathcal{Y}_{s}} \sum_{y_{t} \in \mathcal{Y}_{t}} \hat{P}(\hat{y_{t}}, y_{t})\log\frac{\hat{P}(\hat{y_{t}}, y_{t})}{\hat{P}(y_{t})}
    \end{aligned}
    \label{eq:WT}
\end{align} 

\noindent where $\hat{P}(\hat{y_{t}}, y_{t})$ is the joint distribution of predicted and ground truth target labels and $\hat{P}(y_{t})$ is the marginal distribution of the ground truth labels. These quantities can be easily computed using the optimal coupling matrix (obtained in Eqn \ref{eq:WD}) as follows: 

\begin{equation}
    \hat{P}(\hat{y_{t}}, y_{t}) = \sum_{i,j:\hat{y_{t}^{i}}=\hat{y_{t}}, y_{t}^{j}=y_{t}} \pi_{ij}^{*},
    \label{eq:joint_prob}
\end{equation}

\noindent The marginal distribution can be obtained from the joint distribution as follows:
\vspace{-2mm}
\begin{equation}
    \hat{P}(y_{t}) = \sum_{\hat{y_{t}} \in \mathcal{Y}_{s}} \hat{P}(\hat{y_{t}}, y_{t}),
    \label{eq:marginal_prob}
    \vspace{-1mm}
\end{equation}

\noindent Intuitively, similar tasks will result in a low $W_T$ value. Using $W_T$ i.e CE alone represents empirical transferability according to \cite{nce}. However, in \cite{otce}, it is experimentally shown that using only CE is insufficient in a domain-agnostic setting, which motivates us to combine this with $W_{D}$ to account for feature representation space mismatch.

\vspace{3pt}
\noindent \underline{\textit{Minimize Model Disagreement (Cohesiveness $W_C$).}} For an ensemble, it is important that the individual models reinforce the predictions of each other and have less disagreement amongst themselves to have overall good performance. To understand the cohesiveness of an ensemble, we use Conditional Entropy to capture the amount of disagreement between models in the subset of models $M_e$. Mathematically, we represent $W_C$ as:
\vspace{-2mm}
\begin{equation}
W_C\left( M_{e}, x_{t}\right) = \sum_{m_i, m_j \in M_{e}} H(m_i({x}_{t})|m_j({x}_{t}))
\label{eq:model_cohesive}
\vspace{-1mm}
\end{equation}

\noindent Intuitively, we want a high cohesiveness and less disagreement among the models to reinforce the ensemble's predictive belief, i.e. a low $W_C$ value, and to avoid scenarios where models vote out each other's predictions. 

Bringing the quantities together, we define OSBORN for a subset of models $M_e$ of our source pool $M$ as follows. Our metric collectively accounts for domain difference, task difference and model cohesion. Ref. Fig. \ref{fig:method_diag2} for the overview.
\vspace{-2mm}
\begin{align}
    &\begin{aligned}
      \text{OSBORN} &= \sum_{m_i \in M_e}[ W_D( m_i, x_{t}) +  W_T( m_i, x_{t})] +\\
      &\qquad W_C\left( M_{e}, x_{t}\right)
      \vspace{-1mm}
      \label{eq:osborn_func}    
    \end{aligned}
\end{align} 

\begin{figure}[]
  \raggedleft
  \includegraphics[width=1.1\linewidth]
    {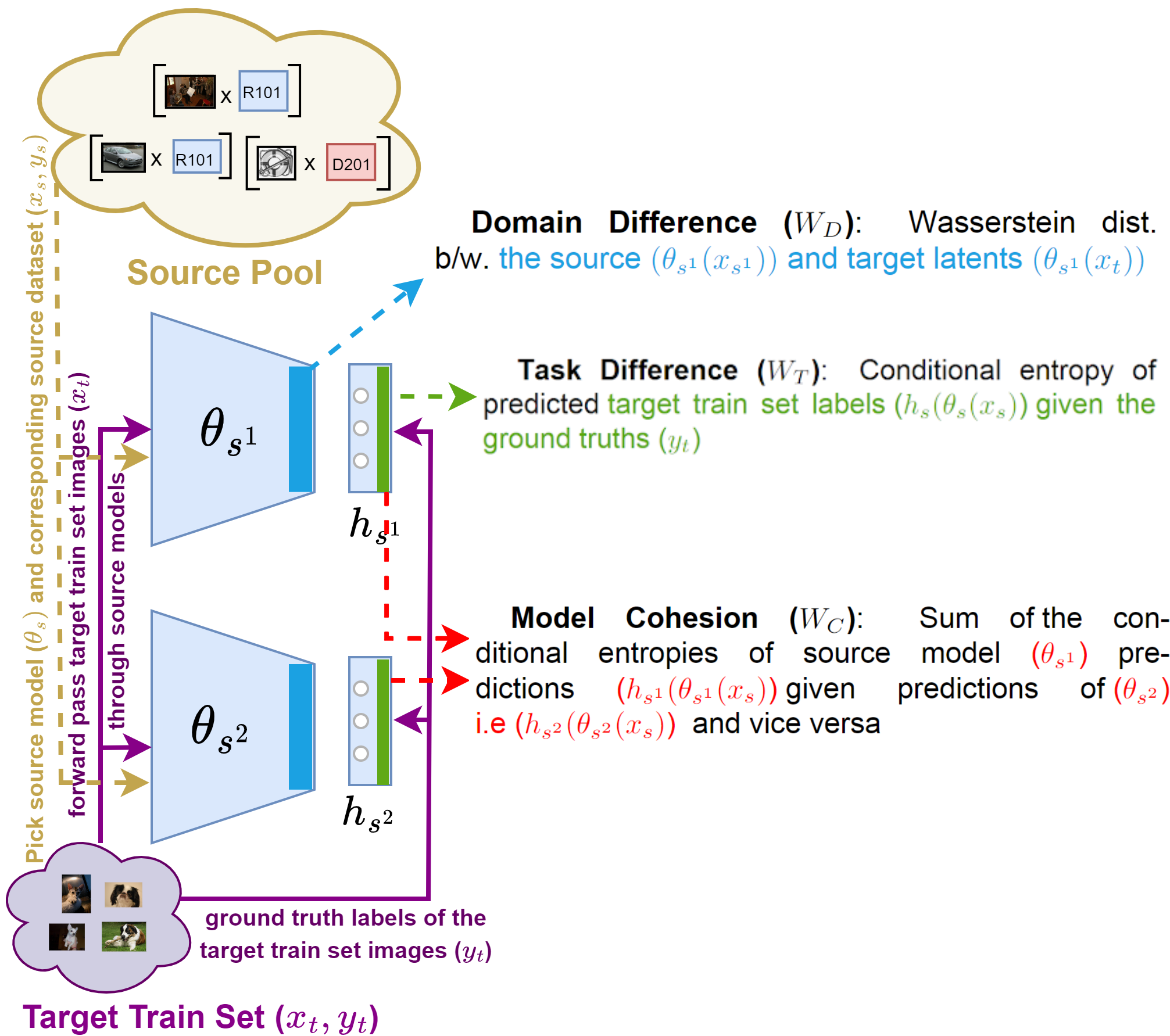}
  \vspace{-7pt}
  \caption{Overview of our method for estimating the transferability for ensembles.}
  \label{fig:method_diag2}
  \vspace{-5mm}
\end{figure}

\noindent A model ensemble that obtains a low OSBORN score will have better transferability to a target dataset. Our experiments show that a simple combination of these three quantities (with no weighting co-efficients) outperforms existing methods in all our experiments. In our ablation studies and analysis, we study the contribution of each OSBORN component as well as the effect of weighting each component differently. 

\vspace{4pt}
\noindent \textbf{Submodular Subset Selection in OSBORN.}
As stated earlier, we show that the proposed OSBORN metric translates to a submodular optimization problem, which allows us to rank and pick models efficiently from the source pool. 
While the aforementioned quantities were written from a \textit{minimization} perspective (for clarity and ease of understanding), to pose this as a submodular \textit{maximization} problem, we consider the corresponding scoring function to be maximized as: 
\vspace{-2mm}
  \begin{align}
      &\begin{aligned}
         f\left( M_e \right) &= -\sum_{m_i \in M_e}[ W_D( m_i, x_{t}) +  W_T( m_i, x_{t})] - \\
      &\qquad W_C\left( M_{e}, x_{t}\right)
      \end{aligned} 
      \label{eq:scoring_func}
  \end{align}

The value of our set function is a transferability estimate designed such that it is highly correlated
to the fine-tune accuracy (see Table \ref{table:fully_supervised} \& \ref{table:self_supervised}), thus enabling us to select models without expensive fine-tuning.

\begin{thm}
\vspace{-1mm}
The scoring function $f\left( X \right)$, as defined in Equation \ref{eq:scoring_func}, is a submodular function.
\end{thm}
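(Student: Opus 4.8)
The plan is to exploit the additive structure of $f$ in Equation \ref{eq:scoring_func} and verify submodularity term by term, since a nonnegative sum of submodular functions is submodular. Write $f(M_e) = -g(M_e) - W_C(M_e, x_t)$, where $g(M_e) = \sum_{m_i \in M_e}[W_D(m_i, x_t) + W_T(m_i, x_t)]$. First I would observe that $g$ is a \emph{modular} (additive) set function: each model $m_i$ contributes a fixed quantity $W_D(m_i,x_t) + W_T(m_i,x_t)$ that depends only on $m_i$ and the target data, not on which other models are in the set. Hence $-g$ satisfies Equation \ref{eq:subdiminishingreturns} with equality, so it is trivially submodular (both modular and its negation are submodular). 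The only real work is the cohesion term.

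Next I would analyze $W_C(M_e, x_t) = \sum_{m_i, m_j \in M_e} H(m_i(x_t) \mid m_j(x_t))$, which is a sum over ordered pairs drawn from $M_e$. The key step is computing the marginal gain of adding a new model $v \notin M_e$: the new pairs introduced are exactly those involving $v$, namely $\{(v, m_j) : m_j \in M_e\} \cup \{(m_i, v) : m_i \in M_e\} \cup \{(v,v)\}$, so
\[
W_C(M_e + v, x_t) - W_C(M_e, x_t) = H(v(x_t)\mid v(x_t)) + \sum_{m_j \in M_e}\bigl[H(v(x_t)\mid m_j(x_t)) + H(m_j(x_t)\mid v(x_t))\bigr].
\]
The diagonal term $H(v(x_t)\mid v(x_t)) = 0$, and each summand is a fixed nonnegative number depending only on the pair $\{v, m_j\}$. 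Therefore, for $X \subseteq Y$ and $v \notin Y$, the marginal gain over $Y$ exceeds that over $X$ by exactly $\sum_{m_j \in Y \setminus X}[H(v(x_t)\mid m_j(x_t)) + H(m_j(x_t)\mid v(x_t))] \geq 0$. This means $W_C$ is \emph{supermodular}, hence $-W_C$ is submodular. Adding the submodular $-g$ and the submodular $-W_C$ gives that $f$ is submodular, completing the argument.

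The main obstacle — really the only subtlety worth flagging — is being careful about the pair-counting in $W_C$: one must decide whether the sum over $m_i, m_j \in M_e$ includes $i = j$ and whether ordered or unordered pairs are counted, and then show the \emph{incremental} set of pairs upon adding $v$ has cost depending only on $v$ and the elements already present. Once that bookkeeping is pinned down, the diminishing-returns inequality in Equation \ref{eq:subdiminishingreturns} follows immediately because every newly created pairwise cost term is nonnegative (conditional entropy is nonnegative) and independent of the rest of the set. I would also note in passing that no assumption on the weighting coefficients is needed here precisely because submodularity is preserved under nonnegative linear combinations, so the unweighted sum in Equation \ref{eq:osborn_func} inherits the property.
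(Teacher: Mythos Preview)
Your proposal is correct and follows essentially the same approach as the paper: both arguments compute the marginal gain of adding $v$, observe that the $W_D+W_T$ contribution is identical regardless of the ambient set (what you call modularity), and then use the nonnegativity of conditional entropy to show that the extra pairwise terms appearing for the larger set make the marginal gain of $-W_C$ weakly decrease. Your decomposition into a modular part plus a supermodular pairwise cost is a slightly cleaner packaging of exactly the same computation the paper performs directly.
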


\begin{proof}
Let $X_1$ and $X_2$ be two sets such that $X_1 \subseteq X_2 \subseteq M$. If we consider an unselected model instance $v \in M\symbol{92}X_2$. The gain in the score is obtained by appending $v$ to the set $X_1$, and this is calculated as:
\vspace{-2mm}
  \begin{align}
      &\begin{aligned}
          f\left( X_1 \cup v \right) - f\left( X_1 \right) &=  - \left[ W_D\left( v, x_{t} \right) +  W_T\left( v, x_{t} \right) \right]\\
      &\qquad -  \sum_{m_i \in X_1} H\left( m_i\left(x_{t} \right) \mid v\left(x_t \right)  \right)\\
      &\qquad - \sum_{m_j \in X_1} H\left( v\left(x_t \right) \mid m_j\left(x_t \right)  \right) 
      \end{aligned} 
      \vspace{-10pt}
      \label{eq:X_1_scoring_func_gain}
  \end{align} 

\noindent Similarly, the gain obtained by set $X_2$ is given by:
\vspace{-2mm}
  \begin{align}
      &\begin{aligned}
          f\left( X_2 \cup v \right) - f\left( X_2 \right) &=  - \left[  W_D\left( v, x_t \right) +  W_T\left( v, x_t \right) \right]\\
      &\qquad -  \sum_{m_i \in X_2} H\left( m_i\left(x_t \right) \mid v\left(x_t \right)  \right) \\
      &\qquad - \sum_{m_j \in X_2} H\left( v\left(x_t \right) \mid m_j\left(x_t \right)  \right) 
      \end{aligned} 
      \label{eq:X_2_scoring_func_gain}
  \end{align} 
  \vspace{-10pt}

\noindent As we have $X_1 \subseteq X_2$, the number of terms in the summation of Equation \ref{eq:X_2_scoring_func_gain} will be greater than or equal to that of Equation \ref{eq:X_1_scoring_func_gain}. Since entropy is always a non-negative value, we can say that 
\vspace{-1mm}
\begin{align*}
    &\begin{aligned}
        -\sum_{m_i \in X_1} H\left( m_i\left(x_t \right) \mid v\left(x_t \right)  \right) - \sum_{m_j \in X_1} H\left( v\left(x_t \right) \mid m_j\left(x_t \right)  \right) &\geq \\
         - \sum_{m_i \in X_2} H\left( m_i\left(x_t \right) \mid v\left(x_t \right)  \right)
         - \sum_{m_j \in X_2} H\left( v\left(x_t \right) \mid m_j\left(x_t \right)  \right)  
         \vspace{-20pt}
    \end{aligned}
\end{align*}
This implies that
\vspace{-2mm}
\begin{align}
    f\left( X_1 \cup v \right) - f\left( X_1 \right) &\geq  
      f\left( X_2 \cup v \right) - f\left( X_2 \right)\label{eq:inequality_proof}
\end{align}

We can see that Equation \ref {eq:inequality_proof} satisfies the condition in Definition \ref{defn:diminishing_returns_submodular_def}.
This completes the proof. 
\end{proof}

\vspace{3pt}
\noindent \textbf{Submodular Optimization using Greedy Maximization.}
 Since our set function $\ensuremath{f}(\ensuremath{M_e})$ (mentioned in Eq. \ref{eq:scoring_func}) is submodular, it exhibits monotonicity, i.e. the set with maximum gain is always the entire source pool $M$. However, since we want to select a subset of models i.e. ensemble set from the source pool $M$, we impose a cardinality constraint. Formally, we aim to select the set $M_e$ of size at most k that maximizes the gain:
 
\vspace{-3.5mm}
 \begin{equation}
     \max_{\ensuremath{M_e}:|\ensuremath{M_e}| = \ensuremath{k}} \ensuremath{f}(\ensuremath{M_e})
 \end{equation}

This problem is however NP-hard, but we use the greedy maximization strategy to find a near-optimal set of models $\ensuremath{M_e}$ for the target dataset. In practice, we pre-calculate pair-wise domain difference $W_D$ and task difference $W_T$ between each source and target datasets. Then, we calculate the model cohesion term $W_C$ for adding each model $m_i$ to the set of already selected models $M_e$. Using these three quantities pertaining to $m_i$, we calculate the gain achieved by adding it to the set $M_e$ as $f\left( M_e \cup m_i \right) - f\left( M_e \right)$ and greedily pick the model with the highest gain and add it to the set $M_e$. We continue this iteration till we achieve the ensemble set size of $k$. Once the target samples are forward-propagated through the source models, the quantities in our metric can be computed independently for each source model, thus making our overall computations parallelizable. 

 
 Considering $\ensuremath{M_{e}^{\ast}}$ as the optimal ensemble set, it is well-known from \cite{nemhauser} that such a greedy approach has a performance guarantee of  at least $63\%$ of the optimal ensemble set,  i.e.





\vspace{-3mm}
\begin{equation}
    \ensuremath{f}\left(\ensuremath{M_{e}}\right) \geq \left( 1 - \frac{1}{e}\right)\ensuremath{f}\left(\ensuremath{M_{e}^{\ast}}\right)
\end{equation}

In practice, we observe that we see that the avg. accuracy of the ensemble selected by greedy ($76.315\%$) in a fully-supervised setting is, $95.56\%$ of the avg. accuracy of the optimal ensemble($79.857\%$). Similarly for self-supervised setting, the avg. accuracy of the ensemble selected by greedy ($79.857\%$) is, $93.50\%$ of the avg. accuracy of the optimal ensemble($84.962\%$), as shown in Table \ref{table:greedy_optimal_accuracy}. More details on the experiments are presented in the next section.



\begin{table}[h]
\centering
\scalebox{0.8}{
\begin{tabular}{ccc}
\toprule
       & \multicolumn{2}{|c}{\textbf{Ensemble   Accuracy (Fully Supervised)}} \\ \cmidrule{2-3}
\multirow{-2}{*}{\textbf{Target Dataset}}         & \multicolumn{1}{|c|}{\textbf{Greedy}} & \textbf{Optimal}       \\ \midrule
Oxford102Flowers & \multicolumn{1}{|c|}{90.720}& {91.697}\\ 
Caltech101       & \multicolumn{1}{|c|}{68.533} & {75.333}\\ 
StanfordCars     & \multicolumn{1}{|c|}{69.692}& {72.540} \\ \midrule
\textbf{Average} & \multicolumn{1}{|c|}{76.315}& {79.857}\\ \midrule
 & \multicolumn{2}{c}{\textbf{Ensemble   Accuracy (Self Supervised)}} \\ \midrule
Oxford102Flowers & \multicolumn{1}{|c|}{86.935}& {95.604}\\ 
Caltech101       & \multicolumn{1}{|c|}{88.800}  & {90.000}    \\ 
StanfordCars     & \multicolumn{1}{|c|}{62.604}& {69.282}\\ \midrule
\textbf{Average} & \multicolumn{1}{|c|}{79.446}& {84.962}\\ \bottomrule
\end{tabular}
}
\vspace{2mm}
\caption{Comparison of the target test set accuracies achieved by fine-tuned ensembles selected using the greedy optimization of OSBORN vs the optimal ensembles. We clearly observe that our approach empirically gives significantly stronger performances than the theoretical guarantee.}
\vspace{-2mm}
\label{table:greedy_optimal_accuracy}
\end{table}



\tocless\section{Experiments and Results}
\label{sec:exptsandresults}

\noindent \textbf{Experimental Setup.}
We follow the same experimental setup as the previous work on source model ensemble selection \cite{eleep} to evaluate our transferability metric in the multiple source model setting. Given a total of $M$ models in the source pool, our objective is to select an ensemble model by choosing $k$ models from the source pool. 
We follow \cite{eleep} in setting $k$ to 3 for fairness of comparison. 
We also conducted a study to evaluate this on the Oxford-IIIT Pets dataset, and found that maximum accuracy is gained for an ensemble of size 3 (see Fig \ref{fig:ensemblesize}), which further reinforces our choice for conducting experiments.

\noindent \textbf{Classification Datasets.} For the classification tasks, we consider 11 widely-used datasets including  CIFAR-10 \cite{cifar100}, CIFAR-100 \cite{cifar100}, Caltech-101 \cite{caltech101}, Stanford Cars \cite{stanfordcars}, Oxford 102 Flowers \cite{oxford102flowers}, Oxford-IIIT Pets \cite{oxfordiiitpets}, Imagenette \cite{imagenette}, CUB200 \cite{cub200}, FashionMNIST \cite{fashionmnist}, SVHN \cite{svhn}, Stanford Dogs \cite{stanforddogs}. These datasets are popularly used in many transfer learning tasks. Out of these 11 datasets, we set Caltech-101 \cite{caltech101}, Stanford Cars \cite{stanfordcars}, Oxford 102 Flowers \cite{oxford102flowers}, Oxford-IIIT Pets \cite{oxfordiiitpets}, Stanford Dogs \cite{stanforddogs} as our target datasets and estimate transferability using OSBORN.

\noindent \textbf{Model Architectures (Fully-supervised).} For this setting, we consider 2 source model architectures ResNet-101 \cite{resnet} and DenseNet-201 \cite{densenet}, keeping in mind the model diversity and capacity. We take these models from the open-sourced PyTorch Library \cite{pytorch}. Initially, both the models are initialized with the fully-supervised ImageNet weights \cite{imagenet}, and then we train them on the 11 classification datasets to prepare our source model pool.

\noindent \textbf{Model Architectures (Self-supervised).} For this setting, we consider ResNet-50 \cite{resnet} as our source model architecture but initialize it with weights obtained from two self-supervised pre-training strategies, namely BYOL \cite{byol} and MoCov2 \cite{mocov2}. We have two variants of ResNet-50 models to produce enough diversity. And as done in the previous case, we train these two models on the 11 classification datasets to prepare our source model pool. We use multiple pre-trained SSL models to build our pool. However, finetuning is done in a fully-supervised fashion.  Our motivation here was to study if OSBORN can estimate transferability reliably across multiple pre-training settings.

\noindent \textbf{Training Setup for Source Models (Classification Tasks).} For all classification tasks, we train the source models using a cross-entropy loss and optimize it using Stochastic Gradient Descent (SGD) with momentum. Given these details, the most important hyperparameters are learning rate, batch size and weight decay. We train the models with a grid search of learning rate in ($1e$$-$$1$, $1e$$-$$2$, $1e$$-$$3$, $1e$$-$$4$), batch size in ($32, 64, 128$), and weight decay in ($1e$$-$$3$, $1e$$-$$4$, $1e$$-$$5$, $1e$$-$$6$, $0$) to pick the best hyperparameters. All our experiments are written in PyTorch and are conducted on a single Tesla V-100 GPU. For the fully-supervised pre-trained setting, we initialize the models with ImageNet weights. In the case of a self-supervised pre-trained setting, we initialize the models using BYOL or MoCov2 (on ImageNet) weights. For our experiments on the multi-domain DomainNet dataset, we initialize our models with ImageNet weights.

\noindent \textbf{Training Setup for Source Models (Semantic Segmentation Tasks).} We train the source models using a pixel-wise cross-entropy loss and optimize it using Stochastic Gradient Descent (SGD) with momentum. The most important hyperparameters herein are learning rate, batch size and weight decay. We train the models with a grid search of learning rate in ($1e$$-$$1$, $1e$$-$$2$, $1e$$-$$3$, $1e$$-$$4$), batch size in (${32, 64, 128}$), and weight decay in ($1e$$-$$3$, $1e$$-$$4$, $1e$$-$$5$, $1e$$-$$6$, $0$), and pick the best hyperparameters. All these experiments are also written in PyTorch and conducted on a single Tesla V-100 GPU. We initialize source models using the COCO pre-trained weights.

\noindent \textbf{Implementation of Source Models and Baselines.} We use open-source models available via the PyTorch Library for classification and semantic segmentation tasks. We use the PyTorch Lightning Library to obtain model weights for a self-supervised pre-training setting. We use the code released by the respective papers for calculating OTCE \cite{otce}, MS-LEEP, E-LEEP, IoU-EEP and SoftIoU-EEP \cite{eleep} scores.

\begin{wrapfigure}[13]{r}{0.28\textwidth}
  \centering
    \includegraphics[width=1\linewidth]
{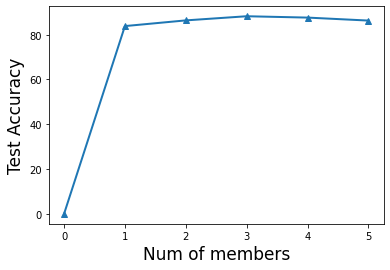}
  \caption{Test accuracy on the Oxford-IIIT Pets dataset compared to the ensemble size. We observed a similar trend across other datasets as well.}
  \label{fig:ensemblesize}
\end{wrapfigure}

\noindent \textbf{Evaluating Ensemble Performance.} We follow the protocol in \cite{eleep} for computing ground truth accuracies of ensembles. We finetune (both feature extractor and classifier of) all the source models present in the ensemble using the target training set. Then, we individually make predictions using the source models on the target test set and average them to get the final ensemble prediction. We note that no target-trained models are in the source pool. We compare this final prediction with the ground-truth label and calculate the classification accuracy. Note that we need to fine-tune all source models only once and can re-use their predictions on the test set across all ensemble combinations. As stated earlier, we report Pearson Correlation Coefficient (PCC), Kendall $\tau$ (KT) and Weighted Kendall $\tau$ (WKT) in our results.

\begin{table*}[h]
\centering
\footnotesize
\scalebox{0.9}{
\begin{tabular}{c|ccc|ccc|ccc}
\toprule
 &
  \multicolumn{3}{c}{\textbf{Weighted   Kendall's $\tau$}} &
  \multicolumn{3}{c}{\textbf{Kendall's $\tau$}} &
  \multicolumn{3}{c}{\textbf{Pearson}} \\  
\multirow{-2}{*}{\textbf{Target Dataset}} &
  \multicolumn{1}{c}{\textbf{MS}} &
  \multicolumn{1}{c}{\textbf{E}} &
  \textbf{Ours} &
  \multicolumn{1}{c}{\textbf{MS}} &
  \multicolumn{1}{c}{\textbf{E}} &
  \textbf{Ours} &
  \multicolumn{1}{c}{\textbf{MS}} &
  \multicolumn{1}{c}{\textbf{E}} &
  \textbf{Ours} \\ \midrule
Oxford102Flowers &
  \multicolumn{1}{c}{0.086} &
  \multicolumn{1}{c}{-0.019} &
  \textbf{0.616} &
  \multicolumn{1}{c}{0.138} &
  \multicolumn{1}{c}{0.074} &
  \textbf{0.400} &
  \multicolumn{1}{c}{0.230} &
  \multicolumn{1}{c}{0.164} &
  \textbf{0.456} \\ 
OxfordIIITPets &
  \multicolumn{1}{c}{0.414} &
  \multicolumn{1}{c}{0.393} &
  \textbf{0.558} &
  \multicolumn{1}{c}{0.346} &
  \multicolumn{1}{c}{0.326} &
  \textbf{0.453} &
  \multicolumn{1}{c}{0.504} &
  \multicolumn{1}{c}{0.500} &
  \textbf{0.666} \\ 
StanfordDogs &
  \multicolumn{1}{c}{0.326} &
  \multicolumn{1}{c}{0.323} &
  \textbf{0.477} &
  \multicolumn{1}{c}{0.244} &
  \multicolumn{1}{c}{0.242} &
  \textbf{0.427} &
  \multicolumn{1}{c}{0.398} &
  \multicolumn{1}{c}{0.407} &
  \textbf{0.604} \\ 
Caltech101 &
  \multicolumn{1}{c}{0.435} &
  \multicolumn{1}{c}{0.409} &
  \textbf{0.565} &
  \multicolumn{1}{c}{0.240} &
  \multicolumn{1}{c}{0.231} &
  \textbf{0.335} &
  \multicolumn{1}{c}{0.353} &
  \multicolumn{1}{c}{0.341} &
  \textbf{0.486} \\ 
StanfordCars &
  \multicolumn{1}{c}{0.115} &
  \multicolumn{1}{c}{0.018} &
  \textbf{0.486} &
  \multicolumn{1}{c}{0.137} &
  \multicolumn{1}{c}{0.071} &
  \textbf{0.368} &
  \multicolumn{1}{c}{0.256} &
  \multicolumn{1}{c}{0.163} &
  \textbf{0.549} \\ \midrule
Average &
  \multicolumn{1}{c}{0.275} &
  \multicolumn{1}{c}{0.225} &
  \textbf{0.540} &
  \multicolumn{1}{c}{0.221} &
  \multicolumn{1}{c}{0.190} &
  \textbf{0.367} &
  \multicolumn{1}{c}{0.348} &
  \multicolumn{1}{c}{0.315} &
  \textbf{0.552} \\ \bottomrule
\end{tabular}
}
\vspace{2mm}
\caption{Comparison of different ensemble transferability estimation metrics for fully-supervised models (classification tasks). The best results are indicated in bold. Note: MS: MS-LEEP, E: E-LEEP, Ours: OSBORN.}
\vspace{-3mm}
\label{table:fully_supervised}
\end{table*}

\vspace{3pt}
 \noindent \textbf{Evaluation on Fully-Supervised Pre-Trained Models.}
\label{sec:fullysupervised}
We herein compare our OSBORN with the baseline metrics, i.e. MS-LEEP and E-LEEP, in terms of three correlation metrics, WKT, KT, and PCC\footnote{Our baselines MS-LEEP and E-LEEP use custom proprietary model architectures that are not publicly available. We hence followed the authors' code and guidelines in using their method on the models used in our work, and picked the best-performing hyperparameters for the results corresponding to their baselines shown in this work.}. The correlation values are reported in Table \ref{table:fully_supervised}. Averaged across five target datasets, OSBORN improves $96.36\%$ over MS-LEEP and $140\%$ over E-LEEP in terms of WKT; improves $66.06\%$ over MS-LEEP and $93.16\%$ over E-LEEP in terms of KT; improves $58.62\%$ over MS-LEEP and $75.23\%$ over E-LEEP in terms of PCC. We can visually see the overall performance of our metric outperforming the existing baselines significantly in Fig \ref{fig:wkt_fully_supervised}.

\begin{figure}[t]
  \centering
  \scalebox{1}{
    \includegraphics[width=0.85\linewidth, height=6cm]
{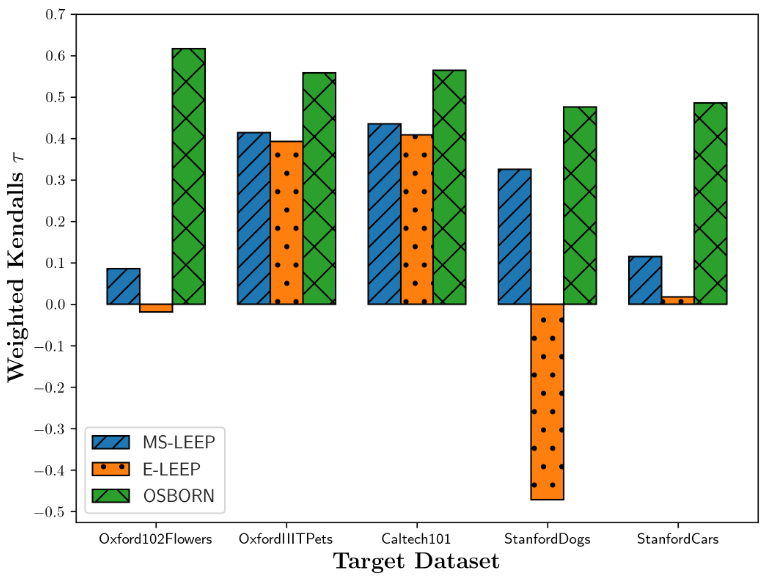}
  }
    \caption{Comparison of OSBORN over 5 target datasets interms of Weighted Kendalls's $\tau$. We can see that our metric constantly outperforms the baselines across every dataset by a large margin.}
    \vspace{-7pt}
  \label{fig:wkt_fully_supervised}
\end{figure}

\vspace{3pt}
 \noindent \textbf{Evaluation on Self-Supervised Pre-Trained Models.}\label{sec:self_supervised}
We compare the performance of our method with the baseline methods, i.e. MS-LEEP and E-LEEP. We present the experimental results regarding different correlation coefficients in Table \ref{table:self_supervised}. Note that we use the Frobenius norm regularizer while solving the OT problem because it gave us better results when compared to using other regularizers. In the appendix, we report results without any regularizers and compare them with the Frobenius norm variant. Table \ref{table:self_supervised} shows that, averaged across five target datasets, OSBORN improves $268.69\%$ over MS-LEEP and $231.82\%$ over E-LEEP in terms of WKT; improves $442.10\%$ over MS-LEEP and $379.07\%$ over E-LEEP in terms of KT; improves $527.27\%$ over MS-LEEP and $392.86\%$ over E-LEEP in terms of PCC.

\begin{table*}[]
\centering
\scalebox{0.77}{
\begin{tabular}{c|ccc|ccc|ccc}
\toprule
 &
  \multicolumn{3}{c}{\textbf{Weighted Kendall's $\tau$}} &
  \multicolumn{3}{c}{\textbf{Kendall  's $\tau$}} &
  \multicolumn{3}{c}{\textbf{Pearson}} \\ 
\multirow{-2}{*}{\textbf{Target Dataset}} &
  \multicolumn{1}{c}{\textbf{MS}} &
  \multicolumn{1}{c}{\textbf{E}} &
  \textbf{Ours} &
  \multicolumn{1}{c}{\textbf{MS}} &
  \multicolumn{1}{c}{\textbf{E}} &
  \textbf{Ours} &
  \multicolumn{1}{c}{\textbf{MS}} &
  \multicolumn{1}{c}{\textbf{E}} &
  \textbf{Ours} \\ \midrule
Oxford102Flowers &
  \multicolumn{1}{c}{-0.080} &
  \multicolumn{1}{c}{-0.090} &
  \textbf{0.549} &
  \multicolumn{1}{c}{-0.035} &
  \multicolumn{1}{c}{-0.050} &
  \textbf{0.336} &
  \multicolumn{1}{c}{-0.077} &
  \multicolumn{1}{c}{-0.090} &
  \textbf{0.306} \\ 
OxfordIIITPets &
  \multicolumn{1}{c}{0.555} &
  \multicolumn{1}{c}{\textbf{0.574}} &
  0.357 &
  \multicolumn{1}{c}{0.221} &
  \multicolumn{1}{c}{\textbf{0.229}} &
  0.139 &
  \multicolumn{1}{c}{0.201} &
  \multicolumn{1}{c}{{0.212}} &
  \textbf{0.232} \\ 
StanfordDogs &
  \multicolumn{1}{c}{0.089} &
  \multicolumn{1}{c}{{0.132}} &
  \textbf{0.170} &
  \multicolumn{1}{c}{0.014} &
  \multicolumn{1}{c}{0.029} &
  \textbf{0.110} &
  \multicolumn{1}{c}{0.132} &
  \multicolumn{1}{c}{0.159} &
  \textbf{0.236} \\ 
Caltech101 &
  \multicolumn{1}{c}{0.290} &
  \multicolumn{1}{c}{0.311} &
  \textbf{0.488} &
  \multicolumn{1}{c}{0.195} &
  \multicolumn{1}{c}{0.228} &
  \textbf{0.308} &
  \multicolumn{1}{c}{0.248} &
  \multicolumn{1}{c}{0.287} &
  \textbf{0.374} \\ 
StanfordCars &
  \multicolumn{1}{c}{-0.359} &
  \multicolumn{1}{c}{-0.377} &
  \textbf{0.260} &
  \multicolumn{1}{c}{-0.207} &
  \multicolumn{1}{c}{-0.221} &
  \textbf{0.139} &
  \multicolumn{1}{c}{-0.285} &
  \multicolumn{1}{c}{-0.289} &
  \textbf{0.232} \\  \midrule
Average &
  \multicolumn{1}{c}{0.099} &
  \multicolumn{1}{c}{0.110} &
  \textbf{0.365} &
  \multicolumn{1}{c}{0.038} &
  \multicolumn{1}{c}{{0.043}} &
  \textbf{0.206} &
  \multicolumn{1}{c}{0.044} &
  \multicolumn{1}{c}{{0.056}} &
  \textbf{0.276} \\ \bottomrule
\end{tabular}

}
\vspace{2mm}
\caption{Comparison of different ensemble transferability estimation metrics for self-supervised pre-trained models (classification tasks). The best results are indicated in bold. Note: MS: MS-LEEP, E: E-LEEP and Ours: OSBORN.}
\vspace{-5mm}
\label{table:self_supervised}
\end{table*}


\vspace{3pt}
\noindent \textbf{Performance of Selected Ensembles.}
\label{sec:ensembleeval}
Table \ref{table:greedy_optimal_accuracy} reports the ensemble accuracy of the models selected through OSBORN. For completeness of this discussion, we also report the same results for OSBORN without greedy maximization as well as for MS-LEEP and E-LEEP in Table \ref{table:ensemble_accuracies}. Following \cite{eleep}, we first calculate the OSBORN value for every ensemble candidate and pick the ensemble that bags the highest value. We follow a similar strategy with MS-LEEP and E-LEEP to pick the best model according to their values. To compute the ensemble accuracy, we used the individual models fine-tuned on the target train set and got their predictions on the target test set. We average these predictions and compare them with the ground truth labels to obtain overall accuracy.  We observe that the ensemble selected by OSBORN achieves the highest test accuracy across all datasets. In the case of both fully supervised and self-supervised settings, the baseline methods, i.e. MS-LEEP and E-LEEP, select the same ensembles (despite having different correlation values) in every case, which is why they obtain the same ensemble accuracy. 

\vspace{3pt}
\noindent \textbf{Scaling Number of Models in Ensemble.}
As shown earlier in this section (Fig \ref{fig:ensemblesize}), we found the performance to saturate after an ensemble size of 3 in the datasets considered in this work as well as in \cite{eleep}. On the other hand, we also observe unsurprisingly that the cost of ensemble selection can go up significantly as the ensemble size increases. 
We show the cost performance of models selected for the Caltech101 dataset in Fig \ref{fig:tr1}. Despite the increasing trend, we note that the time taken is still in the order of seconds, which makes the proposed OSBORN metric practical and relevant. 


\begin{figure}[t]
  \centering
  \scalebox{0.75}{
\includegraphics[width=0.85\linewidth, height=5cm]
{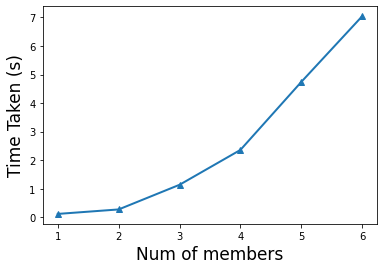}}
  \caption{Cost performace of model selection for the Caltech101 dataset.}
  \vspace{-15pt}
  \label{fig:tr1}
\end{figure}

\begin{table}[]
\scalebox{0.8}{
\begin{tabular}{cc|c|c}
\toprule
& \multicolumn{3}{|c}{\textbf{Ensemble Accuracy (Fully Supervised)}}    
\\ \cmidrule{2-4} 
\multirow{-2}{*}{\textbf{Target Dataset}}       & \multicolumn{1}{|c|}{\textbf{MS-LEEP}} & \multicolumn{1}{c|}{\textbf{E-LEEP}}                & \textbf{Ours}                       \\ \midrule
Oxford102Flowers        & \multicolumn{1}{|c|}{85.347}           & \multicolumn{1}{c|}{85.347} & \textbf{89.865}                           \\ 
Caltech101              & \multicolumn{1}{|c|}{\textbf{68.533}}  & \multicolumn{1}{c|}{\textbf{68.533}}                & \textbf{68.533}                                    \\ 
StanfordCars            & \multicolumn{1}{|c|}{48.623}           & \multicolumn{1}{c|}{48.623}                         & \textbf{62.915}                                    \\ \midrule
\textbf{Average}        & \multicolumn{1}{|c|}{67.501}      & \multicolumn{1}{c|}{67.501}                    & \textbf{73.771}                      \\ \midrule
& \multicolumn{3}{c}{\textbf{Ensemble Accuracy (Self Supervised)}}                                                                       \\ \midrule
Oxford102Flowers        & \multicolumn{1}{|c|}{88.278}           & \multicolumn{1}{c|}{88.278} & \textbf{93.040}                           \\ 
Caltech101              & \multicolumn{1}{|c|}{\textbf{86.933}}   & \multicolumn{1}{c|}{\textbf{86.933}}                 & \textbf{89.333}                           \\ 
StanfordCars            & \multicolumn{1}{|c|}{6.056}             & \multicolumn{1}{c|}{6.056}                           & \textbf{61.820}                           \\ \midrule
\textbf{Average}        & \multicolumn{1}{|c|}{60.422}      & \multicolumn{1}{c|}{60.422}                    & \multicolumn{1}{c}{\textbf{80.598}} \\ \bottomrule
\end{tabular}
}
\vspace{2mm}
\caption{We compare the target test set accuracies achieved by fine-tuned model ensembles picked by MS-LEEP, E-LEEP and OSBORN.}
\vspace{-3mm}
\label{table:ensemble_accuracies}
\end{table}

\vspace{3pt}
\noindent \textbf{Ablation Studies.} 
We conducted additional experiments to understand the influence of each component in OSBORN (included in the Appendix). In general, while simple addition of the three quantities in OSBORN without any weights outperformed previous methods, we observed that these can be finetuned through grid search over a larger range of values to get even better transferability estimates. This however varies with the target dataset.
On Caltech101 as the target dataset, we noticed that
giving more weightage to $W_D$ compared to the other two terms ($W_T$ and $W_C$) achieved higher correlation scores, as shown in Fig \ref{fig:caltech_Wd}. This could be because of the wide variety of images in this dataset. $W_D$ measures the latent space mismatch between such varied images with the source datasets (which may not have overlapping images/representation with the target set), which benefits in this case. 
More detailed analysis is provided in the Appendix.

\begin{figure}[h!]
  \centering
  \scalebox{1}{
\includegraphics[width=1\columnwidth, height=2.4cm]
{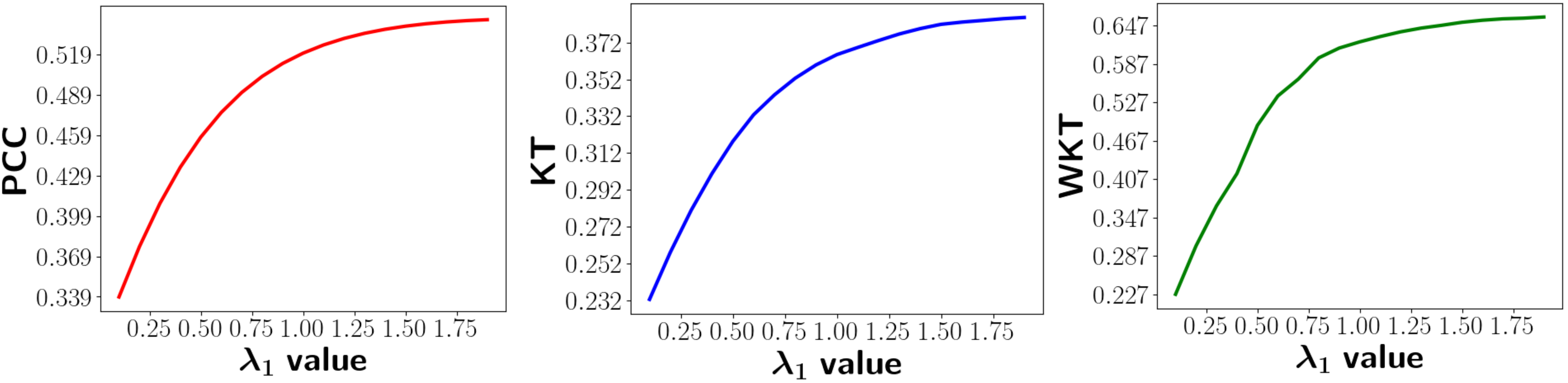}}
  \caption{$W_D$ weightage vs. correlation comparison for Caltech101. We set weights for $W_T$ and $W_C$ as 1.}
  \vspace{-10pt}
  \label{fig:caltech_Wd}
\end{figure}

\vspace{-2pt}
\tocless\section{Conclusions}
\vspace{-4pt}
In this paper, we propose a novel optimal transport-based transferability estimation metric, OSBORN, carefully designed for ensembles that consider multiple factors, such as the mismatch in the latent space, label space, and the cohesiveness amongst the individual models in the ensemble. We show that the proposed metric can be treated as a submodular optimization problem, allowing us to leverage a greedy strategy for source model ensemble selection. We show experimentally that our metric outperforms the existing metrics MS-LEEP and E-LEEP across tasks on multiple correlation metrics. Future directions include increasing the computational efficiency of this method, as well as studying its applicability to other tasks and problem settings.  
\vspace{2mm}
\section*{Acknowledgements}
\vspace{-4pt}
This work was partly supported by KLA and the Department of
Science and Technology, India through the DST ICPS Data Science Cluster program. We would like to thank the authors of \cite{eleep} for insightful discussions. Further, we thank the anonymous reviewers for their valuable feedback that improved the presentation of this paper.

{\small
\bibliographystyle{ieee_fullname}
\bibliography{main}

\begin{thebibliography}{10}\itemsep=-1pt

\bibitem{eleep}
Andrea Agostinelli, Jasper Uijlings, Thomas Mensink, and Vittorio Ferrari.
\newblock Transferability metrics for selecting source model ensembles.
\newblock In {\em 2022 IEEE/CVF Conference on Computer Vision and Pattern
  Recognition}, pages 7926--7936, 2022.

\bibitem{transferrl}
Khurshed Ali, Chih-Yu Wang, and Yi-Shin Chen.
\newblock Leveraging transfer learning in reinforcement learning to tackle
  competitive influence maximization.
\newblock {\em Knowledge and Information Systems}, 64:2059--2090, 2022.

\bibitem{pretrain1}
Hossein Azizpour, Ali~Sharif Razavian, Josephine Sullivan, Atsuto Maki, and
  Stefan Carlsson.
\newblock Factors of transferability for a generic convnet representation.
\newblock {\em IEEE Transactions on Pattern Analysis and Machine Intelligence},
  38(9):1790--1802, 2016.

\bibitem{bach2011submodularity}
Francis Bach.
\newblock Learning with submodular functions: A convex optimization
  perspective.
\newblock {\em Foundations and Trends in Machine Learning}, 6(2-3):145--373,
  2013.

\bibitem{transferrl2}
Adrià~Puigdomènech Badia, Pablo Sprechmann, Alex Vitvitskyi, Daniel Guo,
  Bilal Piot, Steven Kapturowski, Olivier Tieleman, Martín Arjovsky, Alexander
  Pritzel, Andew Bolt, and Charles Blundell.
\newblock Never give up: Learning directed exploration strategies.
\newblock {\em 8th International Conference on Learning Representations, 2020,
  Addis Ababa, Ethiopia, April 26-30, 2020}, 2020.

\bibitem{h_score_paper}
Yajie Bao, Yang Li, Shao{-}Lun Huang, Lin Zhang, Lizhong Zheng, Amir Zamir, and
  Leonidas~J. Guibas.
\newblock An information-theoretic approach to transferability in task transfer
  learning.
\newblock In {\em 2019 {IEEE} International Conference on Image Processing,
  2019, Taipei, Taiwan, September 22-25, 2019}, pages 2309--2313. {IEEE}, 2019.

\bibitem{Breimanbagging}
L. Breiman.
\newblock Bagging predictors.
\newblock {\em Machine Learning}, 24:123--140, 2004.

\bibitem{camvid}
Gabriel~J. Brostow, Julien Fauqueur, and Roberto Cipolla.
\newblock Semantic object classes in video: A high-definition ground truth
  database.
\newblock {\em Pattern Recognition Letters}, 30(2):88--97, 2009.
\newblock Video-based Object and Event Analysis.

\bibitem{mocov2}
Xinlei Chen, Haoqi Fan, Ross~B. Girshick, and Kaiming He.
\newblock Improved baselines with momentum contrastive learning.
\newblock {\em CoRR}, abs/2003.04297, 2020.

\bibitem{cityscapes}
Marius Cordts, Mohamed Omran, Sebastian Ramos, Timo Rehfeld, Markus Enzweiler,
  Rodrigo Benenson, Uwe Franke, Stefan Roth, and Bernt Schiele.
\newblock The cityscapes dataset for semantic urban scene understanding.
\newblock In {\em Proceedings of the IEEE Conference on Computer Vision and
  Pattern Recognition}, June 2016.

\bibitem{sinkhorn}
Marco Cuturi.
\newblock Sinkhorn distances: Lightspeed computation of optimal transport.
\newblock In C.J. Burges, L. Bottou, M. Welling, Z. Ghahramani, and K.Q.
  Weinberger, editors, {\em Advances in Neural Information Processing Systems},
  volume~26. Curran Associates, Inc., 2013.

\bibitem{bert}
Jacob Devlin, Ming-Wei Chang, Kenton Lee, and Kristina~N. Toutanova.
\newblock Bert: Pre-training of deep bidirectional transformers for language
  understanding.
\newblock In {\em Proceedings of the 2019 Conference of the North American
  Chapter of the Association for Computational Linguistics: Human Language
  Technologies, {NAACL-HLT} 2019, Minneapolis, MN, USA, June 2-7, 2019, Volume
  1 (Long and Short Papers)}, pages 4171--4186. Association for Computational
  Linguistics, 2019.

\bibitem{pascalvoc}
Zheng Dong, Ke Xu, Yin Yang, Hujun Bao, Weiwei Xu, and Rynson~W.H. Lau.
\newblock Location-aware single image reflection removal.
\newblock In {\em Proceedings of the IEEE/CVF International Conference on
  Computer Vision}, pages 5017--5026, October 2021.

\bibitem{taskonomy2}
Kshitij Dwivedi and Gemma Roig.
\newblock Representation similarity analysis for efficient task taxonomy \&
  transfer learning.
\newblock In {\em 2019 IEEE/CVF Conference on Computer Vision and Pattern
  Recognition}, pages 12379--12388, 2019.

\bibitem{sslbettertransfer}
Linus Ericsson, Henry Gouk, and Timothy~M. Hospedales.
\newblock How well do self-supervised models transfer?
\newblock In {\em 2021 IEEE/CVF Conference on Computer Vision and Pattern
  Recognition}, pages 5410--5419, 2021.

\bibitem{caltech101}
Li Fei-Fei, R. Fergus, and P. Perona.
\newblock Learning generative visual models from few training examples: An
  incremental bayesian approach tested on 101 object categories.
\newblock In {\em 2004 Conference on Computer Vision and Pattern Recognition
  Workshop}, pages 178--178, 2004.

\bibitem{regot}
Sira Ferradans, Nicolas Papadakis, Gabriel Peyr\'{e}, and Jean-Fran\c{c}ois
  Aujol.
\newblock Regularized discrete optimal transport.
\newblock {\em SIAM Journal on Imaging Sciences}, 7(3):1853--1882, 2014.

\bibitem{adaboost}
Yoav Freund and Robert~E Schapire.
\newblock A decision-theoretic generalization of on-line learning and an
  application to boosting.
\newblock {\em Journal of Computer and System Sciences}, 55(1):119--139, 1997.

\bibitem{deepensemble1}
Mudasir~A. Ganaie, Minghui Hu, Mohammad Tanveer, and Ponnuthurai~N. Suganthan.
\newblock Ensemble deep learning: {A} review.
\newblock {\em Engineering Applications of Artificial Intelligence},
  115:105151, 2022.

\bibitem{rcnn}
Ross Girshick, Jeff Donahue, Trevor Darrell, and Jitendra Malik.
\newblock Rich feature hierarchies for accurate object detection and semantic
  segmentation.
\newblock In {\em 2014 IEEE Conference on Computer Vision and Pattern
  Recognition}, pages 580--587, 2014.

\bibitem{byol}
Jean-Bastien Grill, Florian Strub, Florent Altch\'{e}, Corentin Tallec, Pierre
  Richemond, Elena Buchatskaya, Carl Doersch, Bernardo Avila~Pires, Zhaohan
  Guo, Mohammad Gheshlaghi~Azar, Bilal Piot, koray kavukcuoglu, Remi Munos, and
  Michal Valko.
\newblock Bootstrap your own latent - a new approach to self-supervised
  learning.
\newblock In H. Larochelle, M. Ranzato, R. Hadsell, M.F. Balcan, and H. Lin,
  editors, {\em Advances in Neural Information Processing Systems}, volume~33,
  pages 21271--21284. Curran Associates, Inc., 2020.

\bibitem{ensembleacc2}
L.K. Hansen and P. Salamon.
\newblock Neural network ensembles.
\newblock {\em IEEE Transactions on Pattern Analysis and Machine Intelligence},
  12(10):993--1001, 1990.

\bibitem{subnetworks}
Marton Havasi, Rodolphe Jenatton, Stanislav Fort, Jeremiah~Zhe Liu, Jasper
  Snoek, Balaji Lakshminarayanan, Andrew~M Dai, and Dustin Tran.
\newblock Training independent subnetworks for robust prediction.
\newblock {\em International Conference on Learning Representations}, 2020.

\bibitem{transferssl2}
Kaiming He, Haoqi Fan, Yuxin Wu, Saining Xie, and Ross Girshick.
\newblock Momentum contrast for unsupervised visual representation learning.
\newblock In {\em 2020 IEEE/CVF Conference on Computer Vision and Pattern
  Recognition}, pages 9726--9735, 2020.

\bibitem{resnet}
Kaiming He, Xiangyu Zhang, Shaoqing Ren, and Jian Sun.
\newblock Deep residual learning for image recognition.
\newblock In {\em 2016 IEEE Conference on Computer Vision and Pattern
  Recognition}, pages 770--778, 2016.

\bibitem{lrasppmob}
Andrew~G. Howard, Mark Sandler, Grace Chu, Liang-Chieh Chen, Bo Chen, Mingxing
  Tan, Weijun Wang, Yukun Zhu, Ruoming Pang, Vijay Vasudevan, Quoc~V. Le, and
  Hartwig Adam.
\newblock Searching for mobilenetv3.
\newblock {\em 2019 IEEE/CVF International Conference on Computer Vision},
  pages 1314--1324, 2019.

\bibitem{imagenette}
Jeremy Howard and Sylvain Gugger.
\newblock Fastai: A layered api for deep learning.
\newblock {\em Information}, 11(2), 2020.

\bibitem{densenet}
Gao Huang, Zhuang Liu, and Kilian~Q. Weinberger.
\newblock Densely connected convolutional networks.
\newblock {\em 2017 IEEE Conference on Computer Vision and Pattern
  Recognition}, pages 2261--2269, 2017.

\bibitem{suim}
Md~Jahidul Islam, Chelsey Edge, Yuyang Xiao, Peigen Luo, Muntaqim Mehtaz,
  Christopher Morse, Sadman~Sakib Enan, and Junaed Sattar.
\newblock {Semantic Segmentation of Underwater Imagery: Dataset and Benchmark}.
\newblock In {\em IEEE/RSJ International Conference on Intelligent Robots and
  Systems}. IEEE/RSJ, 2020.

\bibitem{subopt}
Rishabh Iyer and Jeff Bilmes.
\newblock Submodular optimization with submodular cover and submodular knapsack
  constraints.
\newblock volume~26 of {\em Advances in neural information processing systems},
  page 2436–2444, Red Hook, NY, USA, 2013. Curran Associates Inc.

\bibitem{kendaltau}
M.~G. Kendall.
\newblock A new measure of rank correlation.
\newblock {\em Biometrika}, 30(1-2):81--93, 06 1938.

\bibitem{stanforddogs}
Aditya Khosla, Nityananda Jayadevaprakash, Bangpeng Yao, and Li Fei-Fei.
\newblock Novel dataset for fine-grained image categorization.
\newblock In {\em First Workshop on Fine-Grained Visual Categorization, IEEE
  Conference on Computer Vision and Pattern Recognition}, Colorado Springs, CO,
  June 2011.

\bibitem{plfltl}
Simon Kornblith, Jonathon Shlens, and Quoc~V. Le.
\newblock Do better imagenet models transfer better?
\newblock In {\em Proceedings of the IEEE/CVF Conference on Computer Vision and
  Pattern Recognition}, June 2019.

\bibitem{stanfordcars}
Jonathan Krause, Michael Stark, Jia Deng, and Li Fei-Fei.
\newblock 3d object representations for fine-grained categorization.
\newblock In {\em 4th International IEEE Workshop on 3D Representation and
  Recognition (3dRR-13)}, Sydney, Australia, 2013.

\bibitem{cifar100}
Alex Krizhevsky.
\newblock Learning multiple layers of features from tiny images.
\newblock {\em University of Toronto}, 05 2012.

\bibitem{alexnet}
Alex Krizhevsky, Ilya Sutskever, and Geoffrey~E. Hinton.
\newblock Imagenet classification with deep convolutional neural networks.
\newblock {\em Communications of the ACM}, 60:84 -- 90, 2012.

\bibitem{imagenet}
Alex Krizhevsky, Ilya Sutskever, and Geoffrey~E Hinton.
\newblock Imagenet classification with deep convolutional neural networks.
\newblock In {\em Advances in Neural Information Processing Systems},
  volume~25. Curran Associates, Inc., 2012.

\bibitem{coco}
Tsung-Yi Lin, Michael Maire, Serge~J. Belongie, James Hays, Pietro Perona, Deva
  Ramanan, Piotr Doll{\'a}r, and C.~Lawrence Zitnick.
\newblock Microsoft coco: Common objects in context.
\newblock In {\em Computer Vision - {ECCV} 2014 - 13th European Conference,
  Zurich, Switzerland, September 6-12, 2014, Proceedings, Part {V}}, volume
  8693, pages 740--755, 2014.

\bibitem{tldatasets}
Thomas Mensink, Jasper Reinout~Robertus Uijlings, Alina Kuznetsova, Michael
  Gygli, and Vittorio Ferrari.
\newblock Factors of influence for transfer learning across diverse appearance
  domains and task types.
\newblock {\em Transactions on Pattern Analysis and Machine Intelligence},
  2021.

\bibitem{pascalcontext}
Roozbeh Mottaghi, Xianjie Chen, Xiaobai Liu, Nam-Gyu Cho, Seong-Whan Lee, Sanja
  Fidler, Raquel Urtasun, and Alan Yuille.
\newblock The role of context for object detection and semantic segmentation in
  the wild.
\newblock In {\em Proceedings of the IEEE Conference on Computer Vision and
  Pattern Recognition}, June 2014.

\bibitem{transfermedical}
Basil Mustafa, Aaron Loh, Jana von Freyberg, Patricia MacWilliams, Megan
  Wilson, Scott~Mayer McKinney, Marcin Sieniek, Jim Winkens, Yuan Liu, Peggy
  Bui, Shruthi Prabhakara, Umesh Telang, Alan Karthikesalingam, Neil Houlsby,
  and Vivek Natarajan.
\newblock Supervised transfer learning at scale for medical imaging.
\newblock {\em ArXiv}, abs/2101.05913, 2021.

\bibitem{nemhauser}
George Nemhauser, Laurence Wolsey, and M. Fisher.
\newblock An analysis of approximations for maximizing submodular set
  functions—i.
\newblock {\em Mathematical Programming}, 14:265--294, 12 1978.

\bibitem{svhn}
Yuval Netzer, Tao Wang, Adam Coates, Alessandro Bissacco, Bo Wu, and Andrew Ng.
\newblock Reading digits in natural images with unsupervised feature learning.
\newblock {\em Twenty-fifth Conference on Neural Information Processing Systems
  Workshop on Deep Learning and Unsupervised Feature Learning}, 01 2011.

\bibitem{mapillary}
Gerhard Neuhold, Tobias Ollmann, Samuel Rota~Bulo, and Peter Kontschieder.
\newblock The mapillary vistas dataset for semantic understanding of street
  scenes.
\newblock In {\em Proceedings of the IEEE International Conference on Computer
  Vision}, Oct 2017.

\bibitem{leep}
Cuong~V Nguyen, Tal Hassner, C. Archambeau, and Matthias~W. Seeger.
\newblock Leep: A new measure to evaluate transferability of learned
  representations.
\newblock In {\em International Conference on Machine Learning}, 2020.

\bibitem{oxford102flowers}
Maria-Elena Nilsback and Andrew Zisserman.
\newblock Automated flower classification over a large number of classes.
\newblock In {\em 2008 Sixth Indian Conference on Computer Vision, Graphics \&
  Image Processing}, pages 722--729, 2008.

\bibitem{robustness2}
Yaniv Ovadia, Emily Fertig, Jie Ren, Zachary Nado, D. Sculley, Sebastian
  Nowozin, Joshua~V. Dillon, Balaji Lakshminarayanan, and Jasper Snoek.
\newblock {\em Can You Trust Your Model's Uncertainty? Evaluating Predictive
  Uncertainty under Dataset Shift}.
\newblock Proceedings of the 33rd International Conference on Neural
  Information Processing Systems, 2019.

\bibitem{gbc}
Michal P{\'{a}}ndy, Andrea Agostinelli, Jasper R.~R. Uijlings, Vittorio
  Ferrari, and Thomas Mensink.
\newblock Transferability estimation using bhattacharyya class separability.
\newblock {\em 2022 IEEE/CVF Conference on Computer Vision and Pattern
  Recognition}, abs/2111.12780:9162--9172, 2021.

\bibitem{oxfordiiitpets}
Omkar~M. Parkhi, Andrea Vedaldi, Andrew Zisserman, and C.~V. Jawahar.
\newblock Cats and dogs.
\newblock {\em 2012 IEEE Conference on Computer Vision and Pattern
  Recognition}, pages 3498--3505, 2012.

\bibitem{pytorch}
Adam Paszke, Sam Gross, Francisco Massa, Adam Lerer, James Bradbury, Gregory
  Chanan, Trevor Killeen, Zeming Lin, Natalia Gimelshein, Luca Antiga, Alban
  Desmaison, Andreas Kopf, Edward Yang, Zachary DeVito, Martin Raison, Alykhan
  Tejani, Sasank Chilamkurthy, Benoit Steiner, Lu Fang, Junjie Bai, and Soumith
  Chintala.
\newblock Pytorch: An imperative style, high-performance deep learning library.
\newblock In H. Wallach, H. Larochelle, A. Beygelzimer, F. d\textquotesingle
  Alch\'{e}-Buc, E. Fox, and R. Garnett, editors, {\em Advances in Neural
  Information Processing Systems}, volume~32. Curran Associates, Inc., 2019.

\bibitem{domainnet}
Xingchao Peng, Qinxun Bai, Xide Xia, Zijun Huang, Kate Saenko, and Bo Wang.
\newblock Moment matching for multi-source domain adaptation.
\newblock In {\em 2019 IEEE/CVF International Conference on Computer Vision},
  pages 1406--1415, 2019.

\bibitem{cgot}
Alain Rakotomamonjy, R{\'e}mi Flamary, and Nicolas Courty.
\newblock Generalized conditional gradient: analysis of convergence and
  applications.
\newblock {\em CoRR}, abs/1510.06567, 2015.

\bibitem{rcnnfaster}
Shaoqing Ren, Kaiming He, Ross Girshick, and Jian Sun.
\newblock Faster r-cnn: Towards real-time object detection with region proposal
  networks.
\newblock {\em IEEE Transactions on Pattern Analysis and Machine Intelligence},
  39(6):1137--1149, 2017.

\bibitem{mixture_of_models}
Noam Shazeer, Azalia Mirhoseini, Krzysztof Maziarz, Andy Davis, Quoc~V. Le,
  Geoffrey~E. Hinton, and Jeff Dean.
\newblock Outrageously large neural networks: The sparsely-gated
  mixture-of-experts layer.
\newblock {\em 5th International Conference on Learning Representations, 2017,
  Toulon, France, April 24-26, 2017, Conference Track Proceedings}, 2017.

\bibitem{semseglong}
Evan Shelhamer, Jonathan Long, and Trevor Darrell.
\newblock Fully convolutional networks for semantic segmentation.
\newblock {\em 2015 IEEE Conference on Computer Vision and Pattern
  Recognition}, pages 3431--3440, 2015.

\bibitem{deepattr1}
Jie Song, Yixin Chen, Xinchao Wang, Chengchao Shen, and Mingli Song.
\newblock Deep model transferability from attribution maps.
\newblock In H. Wallach, H. Larochelle, A. Beygelzimer, F. d\textquotesingle
  Alch\'{e}-Buc, E. Fox, and R. Garnett, editors, {\em Advances in Neural
  Information Processing Systems}, volume~32. Curran Associates, Inc., 2019.

\bibitem{deepattr2}
Jie Song, Yixin Chen, Jingwen Ye, Xinchao Wang, Chengchao Shen, Feng Mao, and
  Mingli Song.
\newblock Depara: Deep attribution graph for deep knowledge transferability.
\newblock {\em 2020 IEEE/CVF Conference on Computer Vision and Pattern
  Recognition}, pages 3921--3929, 2020.

\bibitem{sunrgbd}
Shuran Song, Samuel~P. Lichtenberg, and Jianxiong Xiao.
\newblock Sun rgb-d: A rgb-d scene understanding benchmark suite.
\newblock In {\em Proceedings of the IEEE Conference on Computer Vision and
  Pattern Recognition}, June 2015.

\bibitem{otce}
Yang Tan, Yang Li, and Shao-Lun Huang.
\newblock Otce: A transferability metric for cross-domain cross-task
  representations.
\newblock In {\em 2021 IEEE/CVF Conference on Computer Vision and Pattern
  Recognition}, pages 15774--15783, 2021.

\bibitem{nce}
Anh Tran, Cuong Nguyen, and Tal Hassner.
\newblock Transferability and hardness of supervised classification tasks.
\newblock In {\em 2019 IEEE/CVF International Conference on Computer Vision},
  pages 1395--1405, 2019.

\bibitem{ensemblefast}
Matias Valdenegro-Toro.
\newblock Deep sub-ensembles for fast uncertainty estimation in image
  classification.
\newblock {\em Bayesian Deep Learning Workshop at Thirty-third Conference on
  Neural Information Processing Systems, 2019}, abs/1910.08168, 2019.

\bibitem{idd}
Girish Varma, Anbumani Subramanian, Anoop Namboodiri, Manmohan Chandraker, and
  C.V. Jawahar.
\newblock Idd: A dataset for exploring problems of autonomous navigation in
  unconstrained environments.
\newblock In {\em 2019 IEEE Winter Conference on Applications of Computer
  Vision}, pages 1743--1751, 2019.

\bibitem{weightedkendall}
Sebastiano Vigna.
\newblock A weighted correlation index for rankings with ties.
\newblock {\em Proceedings of the 24th International Conference on World Wide
  Web}, 2015.

\bibitem{violajones}
Paul Viola and Michael Jones.
\newblock Rapid object detection using a boosted cascade of simple features.
\newblock In {\em Proceedings of the 2001 IEEE Computer Society Conference on
  Computer Vision and Pattern Recognition, 2001}, volume~1, pages I--I. Ieee,
  2001.

\bibitem{wisdom-of-committees2022iclr}
Xiaofang Wang, Dan Kondratyuk, Eric Christiansen, Kris~M. Kitani, Yair
  Movshovitz{-}Attias, and Elad Eban.
\newblock Wisdom of committees: An overlooked approach to faster and more
  accurate models.
\newblock In {\em International Conference on Learning Representations}, 2022.

\bibitem{wei2015icml}
Kai Wei, Rishabh~K. Iyer, and Jeff~A. Bilmes.
\newblock Submodularity in data subset selection and active learning.
\newblock In Francis~R. Bach and David~M. Blei, editors, {\em International
  Conference on Machine Learning}, volume~37, pages 1954--1963, 2015.

\bibitem{cub200}
P. Welinder, S. Branson, T. Mita, C. Wah, F. Schroff, S. Belongie, and P.
  Perona.
\newblock {Caltech-UCSD Birds 200}.
\newblock Technical Report CNS-TR-2010-001, California Institute of Technology,
  2010.

\bibitem{fashionmnist}
Han Xiao, Kashif Rasul, and Roland Vollgraf.
\newblock Fashion-mnist: a novel image dataset for benchmarking machine
  learning algorithms.
\newblock {\em CoRR}, 08 2017.

\bibitem{ensembleacc3}
Yongquan Yang, Haijun Lv, and Ning Chen.
\newblock A survey on ensemble learning under the era of deep learning.
\newblock {\em Artificial Intelligence Review}, nov 2022.

\bibitem{layerstl}
Jason Yosinski, Jeff Clune, Yoshua Bengio, and Hod Lipson.
\newblock How transferable are features in deep neural networks?
\newblock {\em Conference on Neural Information Processing Systems}, 2014.

\bibitem{logme}
Kaichou You, Yong Liu, Mingsheng Long, and Jianmin Wang.
\newblock Logme: Practical assessment of pre-trained models for transfer
  learning.
\newblock In {\em International Conference on Machine Learning}, 02 2021.

\bibitem{bdd}
Fisher Yu, Haofeng Chen, Xin Wang, Wenqi Xian, Yingying Chen, Fangchen Liu,
  Vashisht Madhavan, and Trevor Darrell.
\newblock Bdd100k: A diverse driving dataset for heterogeneous multitask
  learning.
\newblock In {\em Proceedings of the IEEE/CVF Conference on Computer Vision and
  Pattern Recognition}, June 2020.

\bibitem{taskonomy}
Amir~R. Zamir, Alexander Sax, William Shen, Leonidas Guibas, Jitendra Malik,
  and Silvio Savarese.
\newblock Taskonomy: Disentangling task transfer learning.
\newblock In {\em 2018 IEEE/CVF Conference on Computer Vision and Pattern
  Recognition}, pages 3712--3722, 2018.

\bibitem{ppn}
Hengshuang Zhao, Jianping Shi, Xiaojuan Qi, Xiaogang Wang, and Jiaya Jia.
\newblock Pyramid scene parsing network.
\newblock In {\em 2017 IEEE Conference on Computer Vision and Pattern
  Recognition}, pages 6230--6239, 2017.

\end{thebibliography}
}

\newpage
\appendix

\section*{Appendix}
\noindent In this appendix, we provide additional details which we could not include in the main paper due to space constraints, including additional results, details and analysis that provide more insights into the proposed method. In particular, we discuss the following:

\tableofcontents
\section{Comparison against OTCE}
In this section, we compare OSBORN with the OTCE metric. OTCE is limited by its ability to estimate transferability for a single source model; however, we naively add the OTCE scores of the individual models present in the ensemble to make it a multi-source variant. The results in terms of various correlations are shown in Tab. \ref{table:otcevsosborn}. OSBORN outperforms OTCE by $131.76\%$ in terms of WKT, $235.59\%$ in terms of KT and $513.33\%$ in terms of PCC. 

\section{Modified Baselines}
In this section, we understand the effect of adding the model cohesion term $W_{C}$ to our baselines i.e. MS-LEEP and E-LEEP. Table \ref{table:mod_baselines} shows the results. While it expectedly improves correlations of these baselines (further corroborating the usefulness of our proposed cohesiveness term), OSBORN still achieves higher correlations than these modified baselines.

\begin{table}[h!]
\centering
\scalebox{0.7}{
\begin{tabular}{c|cc|cc|cc}
\toprule
\multirow{2}{*}{\textbf{Target Dataset}} &
  \multicolumn{2}{c}{\textbf{Weighted Kendall's $\tau$}} &
  \multicolumn{2}{c}{\textbf{Kendall's $\tau$}} &
  \multicolumn{2}{c}{\textbf{Pearson}} \\ 
 &
  \multicolumn{1}{c}{\textbf{OTCE}} &
  \textbf{Ours} &
  \multicolumn{1}{c}{\textbf{OTCE}} &
  \textbf{Ours} &
  \multicolumn{1}{c}{\textbf{OTCE}} &
  \textbf{Ours} \\ \midrule
Oxford102Flowers &
  \multicolumn{1}{c}{0.406} &
  \textbf{0.616} &
  \multicolumn{1}{c}{0.118} &
  \textbf{0.400} &
  \multicolumn{1}{c}{0.086} &
  \textbf{0.456} \\ 
OxfordIIITPets &
  \multicolumn{1}{c}{0.186} &
  \textbf{0.558} &
  \multicolumn{1}{c}{0.075} &
  \textbf{0.453} &
  \multicolumn{1}{c}{0.109} &
  \textbf{0.666} \\ 
StanfordDogs &
  \multicolumn{1}{c}{0.093} &
  \textbf{0.477} &
  \multicolumn{1}{c}{0.05} &
  \textbf{0.427} &
  \multicolumn{1}{c}{0.088} &
  \textbf{0.604} \\ 
Caltech101 &
  \multicolumn{1}{c}{0.179} &
  \textbf{0.565} &
  \multicolumn{1}{c}{0.223} &
  \textbf{0.335} &
  \multicolumn{1}{c}{0.068} &
  \textbf{0.486} \\ 
StanfordCars &
  \multicolumn{1}{c}{0.300} &
  \textbf{0.486} &
  \multicolumn{1}{c}{0.123} &
  \textbf{0.368} &
  \multicolumn{1}{c}{0.100} &
  \textbf{0.549} \\ \midrule
Average &
  \multicolumn{1}{c}{0.233} &
  \textbf{0.540} &
  \multicolumn{1}{c}{0.118} &
  \textbf{0.396} &
  \multicolumn{1}{c}{0.090} &
  \textbf{0.552} \\ \bottomrule
\end{tabular}}
\vspace{2mm}
\caption{OTCE vs OSBORN (Ours)}
\label{table:otcevsosborn}
\end{table}

\begin{figure*}
  \centering
\includegraphics[width=\textwidth]
{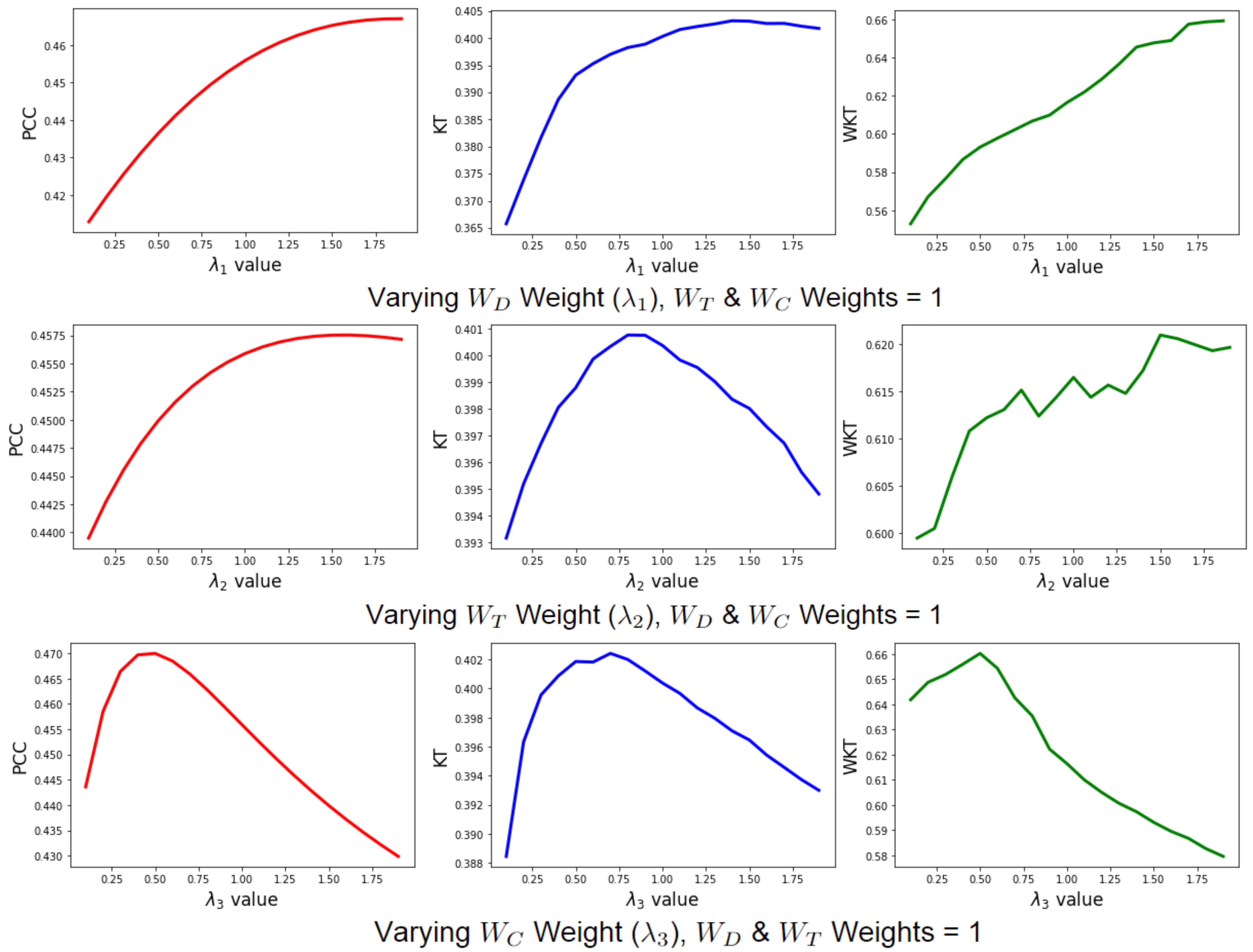}
  \caption{Relation between weighted coefficient values for terms in OSBORN and corresponding correlation scores for Oxford102Flower}
  \label{fig:oxford102flower}
\end{figure*}


\begin{table*}[]
\scalebox{0.77}{
\begin{tabular}{c|ccccc|ccccc|ccccc}
\toprule
\multirow{2}{*}{\textbf{Target Dataset}} &
  \multicolumn{5}{c}{\textbf{Weighted   Kendall's $\tau$}} &
  \multicolumn{5}{c}{\textbf{Kendall's $\tau$}} &
  \multicolumn{5}{c}{\textbf{Pearson}} \\ \cmidrule{2-16} 
 &
  \multicolumn{1}{c}{\textbf{MS}} &
  \multicolumn{1}{c}{\textbf{E}} &
  \multicolumn{1}{c}{\textbf{IoU}} &
  \multicolumn{1}{c}{\textbf{sIoU}} &
  \textbf{Ours} &
  \multicolumn{1}{c}{\textbf{MS}} &
  \multicolumn{1}{c}{\textbf{E}} &
  \multicolumn{1}{c}{\textbf{IoU}} &
  \multicolumn{1}{c}{\textbf{sIoU}} &
  \textbf{Ours} &
  \multicolumn{1}{c}{\textbf{MS}} &
  \multicolumn{1}{c}{\textbf{E}} &
  \multicolumn{1}{c}{\textbf{IoU}} &
  \multicolumn{1}{c}{\textbf{sIoU}} &
  \textbf{Ours} \\ \midrule
Camvid    &
  \multicolumn{1}{c}{-0.173} &
  \multicolumn{1}{c}{-0.279} &
  \multicolumn{1}{c}{0.175} &
  \multicolumn{1}{c}{-0.074} &
  \textbf{0.190} &
  \multicolumn{1}{c}{-0.006} &
  \multicolumn{1}{c}{-0.108} &
  \multicolumn{1}{c}{0.030} &
  \multicolumn{1}{c}{-0.050} &
  \textbf{0.114} &
  \multicolumn{1}{c}{0.088} &
  \multicolumn{1}{c}{-0.050} &
  \multicolumn{1}{c}{0.071} &
  \multicolumn{1}{c}{-0.024} &
  \textbf{0.091} \\ 
Cityscapes    &
  \multicolumn{1}{c}{-0.356} &
  \multicolumn{1}{c}{-0.390} &
  \multicolumn{1}{c}{-0.306} &
  \multicolumn{1}{c}{-0.153} &
  \textbf{0.056} &
  \multicolumn{1}{c}{-0.166} &
  \multicolumn{1}{c}{-0.188} &
  \multicolumn{1}{c}{-0.115} &
  \multicolumn{1}{c}{-0.090} &
  \textbf{0.108} &
  \multicolumn{1}{c}{-0.263} &
  \multicolumn{1}{c}{-0.241} &
  \multicolumn{1}{c}{-0.191} &
  \multicolumn{1}{c}{-0.154} &
  \textbf{0.216} \\ 
SUIM    &
  \multicolumn{1}{c}{0.052} &
  \multicolumn{1}{c}{0.051} &
  \multicolumn{1}{c}{0.191} &
  \multicolumn{1}{c}{0.097} &
  \textbf{0.237} &
  \multicolumn{1}{c}{-0.014} &
  \multicolumn{1}{c}{-0.016} &
  \multicolumn{1}{c}{\textbf{0.084}} &
  \multicolumn{1}{c}{0.075} &
  0.078 &
  \multicolumn{1}{c}{-0.024} &
  \multicolumn{1}{c}{-0.028} &
  \multicolumn{1}{c}{\textbf{0.230}} &
  \multicolumn{1}{c}{0.164} &
  0.112 \\ \midrule
Average &
  \multicolumn{1}{c}{-0.159} &
  \multicolumn{1}{c}{-0.053} &
  \multicolumn{1}{c}{0.020} &
  \multicolumn{1}{c}{-0.043} &
  \textbf{0.161} &
  \multicolumn{1}{c}{-0.062} &
  \multicolumn{1}{c}{-0.104} &
  \multicolumn{1}{c}{0.0003} &
  \multicolumn{1}{c}{-0.022} &
  \textbf{0.1} &
  \multicolumn{1}{c}{-0.066} &
  \multicolumn{1}{c}{-0.106} &
  \multicolumn{1}{c}{0.037} &
  \multicolumn{1}{c}{-0.005} &
  \textbf{0.140} \\ \bottomrule
\end{tabular}
}
 \vspace{1mm}
\caption{Comparison of different ensemble transferability estimation metrics for 
 semantic segmentation tasks. On average, we beat all the previously proposed methods for estimating transferability for semantic segmentation in terms of correlations. Note, MS: MS-LEEP, E: E-LEEP, IoU: IoU-EEP, sIoU: SoftIoU-EEP.}
\label{tab:sem_seg}
\end{table*}

\begin{table*}[h]
\centering
\scalebox{0.75}{
\begin{tabular}{c|cccc|cccc|cccc}
\toprule
 &
  \multicolumn{4}{c}{\textbf{Weighted   Kendall's $\tau$}} &
  \multicolumn{4}{c}{\textbf{Kendall's   $\tau$}} &
  \multicolumn{4}{c}{\textbf{Pearson}} \\  
\multirow{-2}{*}{\textbf{Target Dataset}} &
  \multicolumn{1}{c}{\textbf{MS}} &
  \multicolumn{1}{c}{\textbf{E}} &
  \multicolumn{1}{c}{\textbf{$W_C$   + MS}} &
  \textbf{$W_C$   + E} &
  \multicolumn{1}{c}{\textbf{MS}} &
  \multicolumn{1}{c}{\textbf{E}} &
  \multicolumn{1}{c}{\textbf{$W_C$   + MS}} &
  \textbf{$W_C$   + E} &
  \multicolumn{1}{c}{\textbf{MS}} &
  \multicolumn{1}{c}{\textbf{E}} &
  \multicolumn{1}{c}{\textbf{$W_C$   + MS}} &
  \textbf{$W_C$   + E} \\ \midrule
Oxford102Flowers &
  \multicolumn{1}{c}{0.086} &
  \multicolumn{1}{c}{-0.019} &
  \multicolumn{1}{c}{0.413} &
  \textbf{0.459} &
  \multicolumn{1}{c}{0.138} &
  \multicolumn{1}{c}{0.0739} &
  \multicolumn{1}{c}{0.315} &
  \textbf{0.330} &
  \multicolumn{1}{c}{0.23} &
  \multicolumn{1}{c}{0.164} &
  \multicolumn{1}{c}{\textbf{0.401}} &
  0.385 \\ 
OxfordIIITPets &
  \multicolumn{1}{c}{0.414} &
  \multicolumn{1}{c}{0.393} &
  \multicolumn{1}{c}{\textbf{0.540}} &
  0.522 &
  \multicolumn{1}{c}{0.346} &
  \multicolumn{1}{c}{0.326} &
  \multicolumn{1}{c}{0.473} &
  \textbf{0.475} &
  \multicolumn{1}{c}{0.504} &
  \multicolumn{1}{c}{0.5} &
  \multicolumn{1}{c}{0.666} &
  \textbf{0.676} \\ 
Caltech101 &
  \multicolumn{1}{c}{\textbf{0.435}} &
  \multicolumn{1}{c}{0.409} &
  \multicolumn{1}{c}{0.314} &
  0.385 &
  \multicolumn{1}{c}{0.240} &
  \multicolumn{1}{c}{0.231} &
  \multicolumn{1}{c}{\textbf{0.242}} &
  0.236 &
  \multicolumn{1}{c}{0.353} &
  \multicolumn{1}{c}{0.341} &
  \multicolumn{1}{c}{0.315} &
  \textbf{0.354} \\ 
StanfordDogs &
  \multicolumn{1}{c}{0.326} &
  \multicolumn{1}{c}{-0.472} &
  \multicolumn{1}{c}{0.348} &
  \textbf{0.384} &
  \multicolumn{1}{c}{0.244} &
  \multicolumn{1}{c}{-0.236} &
  \multicolumn{1}{c}{0.269} &
  \textbf{0.326} &
  \multicolumn{1}{c}{0.398} &
  \multicolumn{1}{c}{-0.154} &
  \multicolumn{1}{c}{0.496} &
  \textbf{0.571} \\ 
StanfordCars &
  \multicolumn{1}{c}{0.115} &
  \multicolumn{1}{c}{0.018} &
  \multicolumn{1}{c}{0.066} &
  \textbf{0.147} &
  \multicolumn{1}{c}{0.137} &
  \multicolumn{1}{c}{0.071} &
  \multicolumn{1}{c}{0.144} &
  \textbf{0.185} &
  \multicolumn{1}{c}{0.256} &
  \multicolumn{1}{c}{0.163} &
  \multicolumn{1}{c}{0.360} &
  \textbf{0.434} \\ \midrule
Average &
  \multicolumn{1}{c}{0.275} &
  \multicolumn{1}{c}{0.097} &
  \multicolumn{1}{c}{0.265} &
  \textbf{0.301} &
  \multicolumn{1}{c}{0.221} &
  \multicolumn{1}{c}{0.110} &
  \multicolumn{1}{c}{0.246} &
  \textbf{0.259} &
  \multicolumn{1}{c}{0.348} &
  \multicolumn{1}{c}{0.222} &
  \multicolumn{1}{c}{0.383} &
  \textbf{0.407} \\ \bottomrule
\end{tabular}
}
\vspace{2mm}
\caption{Comparison of baselines and modified baselines. Note: MS: MS-LEEP, E: E-LEEP, $W_{C}$: Model Cohesion term}
\label{table:mod_baselines}
\end{table*}

\begin{table*}[h]
\centering
\scalebox{0.75}{
\begin{tabular}{ccccccc}
\toprule
\multicolumn{1}{c|}{\multirow{2}{*}{\textbf{Target Dataset}}} &
  \multicolumn{2}{c}{\textbf{Weighted Kendall's $\tau$}} &
  \multicolumn{2}{c}{\textbf{Kendall's $\tau$}} &
  \multicolumn{2}{c}{\textbf{Pearson}} \\ 
\multicolumn{1}{c|}{} &
  \multicolumn{1}{c}{\textbf{Standard}} &
  \multicolumn{1}{c|}{\textbf{Frobenius}} &
  \multicolumn{1}{c}{\textbf{Standard}} &
  \multicolumn{1}{c|}{\textbf{Frobenius}} &
  \multicolumn{1}{c}{\textbf{Standard}} &
  \textbf{Frobenius} \\ \midrule
\multicolumn{1}{c|}{Oxford102Flowers} &
  \multicolumn{1}{c}{\textbf{0.616}} &
  \multicolumn{1}{c|}{0.614} &
  \multicolumn{1}{c}{\textbf{0.400}} &
  \multicolumn{1}{c|}{0.390} &
  \multicolumn{1}{c}{\textbf{0.456}} &
  0.463 \\ 
\multicolumn{1}{c|}{OxfordIIITPets} &
  \multicolumn{1}{c}{\textbf{0.558}} &
  \multicolumn{1}{c|}{0.539} &
  \multicolumn{1}{c}{\textbf{0.453}} &
  \multicolumn{1}{c|}{0.446} &
  \multicolumn{1}{c}{\textbf{0.666}} &
  0.660 \\ 
\multicolumn{1}{c|}{Caltech101} &
  \multicolumn{1}{c}{\textbf{0.565}} &
  \multicolumn{1}{c|}{0.557} &
  \multicolumn{1}{c}{\textbf{0.335}} &
  \multicolumn{1}{c|}{0.329} &
  \multicolumn{1}{c}{\textbf{0.486}} &
  0.483 \\ 
\multicolumn{1}{c|}{StanfordDogs} &
  \multicolumn{1}{c}{0.477} &
  \multicolumn{1}{c|}{\textbf{0.581}} &
  \multicolumn{1}{c}{0.427} &
  \multicolumn{1}{c|}{\textbf{0.508}} &
  \multicolumn{1}{c}{0.604} &
  \textbf{0.628} \\ 
\multicolumn{1}{c|}{StanfordCars} &
  \multicolumn{1}{c}{\textbf{0.486}} &
  \multicolumn{1}{c|}{0.445} &
  \multicolumn{1}{c}{\textbf{0.368}} &
  \multicolumn{1}{c|}{0.361} &
  \multicolumn{1}{c}{\textbf{0.549}} &
  0.544 \\ \midrule
\multicolumn{1}{c|}{Average} &
  \multicolumn{1}{c}{0.540} &
  \multicolumn{1}{c|}{\textbf{0.547}} &
  \multicolumn{1}{c}{0.397} &
  \multicolumn{1}{c|}{\textbf{0.407}} &
  \multicolumn{1}{c}{0.552} &
  \textbf{0.556} \\ \bottomrule
\end{tabular}}
\vspace{2mm}
\caption{In this table, we report the change in correlations obtained using a Frobenius norm based regularizer rather than a standard (non-regularized) method for the fully-supervised pre-trained models (classification tasks).}
\label{table:regularizer}
\end{table*}
\begin{figure*}[h!]
  \centering
\includegraphics[width=\textwidth]
{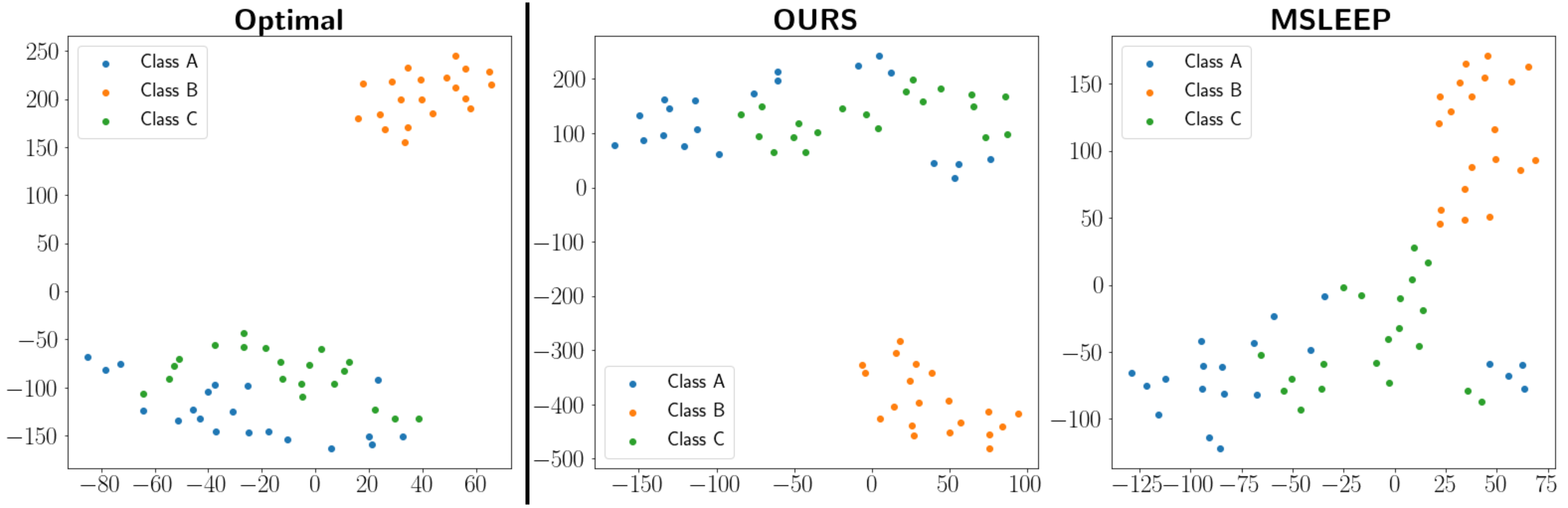}
  \caption{t-SNE plots of features learned by corresponding method's ensembles on StanfordCars dataset. `\textit{Optimal}' chooses best ensemble with exhaustive search}
  \label{fig:tsne}
\end{figure*}

\begin{table*}[]
\footnotesize
\centering
\begin{tabular}{c|ccc|ccc|ccc}
\toprule
 &
  \multicolumn{3}{c}{\textbf{Weighted   Kendall's $\tau$}} &
  \multicolumn{3}{c}{\textbf{Kendall's $\tau$}} &
  \multicolumn{3}{c}{\textbf{Pearson}} \\  
\multirow{-2}{*}{\textbf{Target Domain}} &
  \multicolumn{1}{c}{\textbf{MS}} &
  \multicolumn{1}{c}{\textbf{E}} &
  \textbf{Ours} &
  \multicolumn{1}{c}{\textbf{MS}} &
  \multicolumn{1}{c}{\textbf{E}} &
  \textbf{Ours} &
  \multicolumn{1}{c}{\textbf{MS}} &
  \multicolumn{1}{c}{\textbf{E}} &
  \textbf{Ours} \\ \midrule
Real &
  \multicolumn{1}{c}{0.057} &
  \multicolumn{1}{c}{0.026} &
  \textbf{0.576} &
  \multicolumn{1}{c}{0.016} &
  \multicolumn{1}{c}{-0.011} &
  \textbf{0.415} &
  \multicolumn{1}{c}{0.010} &
  \multicolumn{1}{c}{-0.033} &
  \textbf{0.518} \\ 
Infograph &
  \multicolumn{1}{c}{0.165} &
  \multicolumn{1}{c}{0.163} &
  \textbf{0.298} &
  \multicolumn{1}{c}{0.046} &
  \multicolumn{1}{c}{0.048} &
  \textbf{0.230} &
  \multicolumn{1}{c}{0.076} &
  \multicolumn{1}{c}{0.057} &
  \textbf{0.308} \\ 
Clipart &
  \multicolumn{1}{c}{0.003} &
  \multicolumn{1}{c}{-0.076} &
  \textbf{0.040} &
  \multicolumn{1}{c}{0.115} &
  \multicolumn{1}{c}{0.078} &
  \textbf{0.161} &
  \multicolumn{1}{c}{\textbf{0.248}} &
  \multicolumn{1}{c}{0.193} &
  0.179 \\ \midrule
Average &
  \multicolumn{1}{c}{0.075} &
  \multicolumn{1}{c}{0.038} &
  \textbf{0.305} &
  \multicolumn{1}{c}{0.059} &
  \multicolumn{1}{c}{0.038} &
  \textbf{0.269} &
  \multicolumn{1}{c}{0.111} &
  \multicolumn{1}{c}{0.072} &
  \textbf{0.335} \\ \bottomrule
\end{tabular}
\vspace{2mm}
\caption{Comparison of different ensemble transferability estimation metrics for classification tasks on the DomainNet dataset.
Averaged across 3 domains, OSBORN achieves the best results under all the correlation values. MS: MS-LEEP, and E: E-LEEP.}
\label{tab:domainadaptation}
\end{table*}
\begin{table*}[h!]
\centering
\scalebox{0.75}{
\begin{tabular}{ccccccc}
\toprule
\multicolumn{1}{c|}{\multirow{2}{*}{\textbf{Target Dataset}}} &
  \multicolumn{2}{c}{\textbf{Weighted Kendall's $\tau$}} &
  \multicolumn{2}{c}{\textbf{Kendall's $\tau$}} &
  \multicolumn{2}{c}{\textbf{Pearson}} \\ 
\multicolumn{1}{c|}{} &
  \multicolumn{1}{c}{\textbf{Standard}} &
  \multicolumn{1}{c|}{\textbf{Frobenius}} &
  \multicolumn{1}{c}{\textbf{Standard}} &
  \multicolumn{1}{c|}{\textbf{Frobenius}} &
  \multicolumn{1}{c}{\textbf{Standard}} &
  \textbf{Frobenius} \\ \midrule
\multicolumn{1}{c|}{Oxford102Flowers} &
  \multicolumn{1}{c}{0.492} &
  \multicolumn{1}{c|}{\textbf{0.549}} &
  \multicolumn{1}{c}{0.293} &
  \multicolumn{1}{c|}{\textbf{0.336}} &
  \multicolumn{1}{c}{0.272} &
  \textbf{0.306} \\ 
\multicolumn{1}{c|}{OxfordIIITPets} &
  \multicolumn{1}{c}{0.316} &
  \multicolumn{1}{c|}{\textbf{0.357}} &
  \multicolumn{1}{c}{0.123} &
  \multicolumn{1}{c|}{\textbf{0.139}} &
  \multicolumn{1}{c}{0.193} &
  \textbf{0.232} \\ 
\multicolumn{1}{c|}{StanfordDogs} &
  \multicolumn{1}{c}{0.140} &
  \multicolumn{1}{c|}{\textbf{0.170}} &
  \multicolumn{1}{c}{0.074} &
  \multicolumn{1}{c|}{\textbf{0.110}} &
  \multicolumn{1}{c}{0.210} &
  \textbf{0.236} \\ 
\multicolumn{1}{c|}{Caltech101} &
  \multicolumn{1}{c}{0.484} &
  \multicolumn{1}{c|}{\textbf{0.488}} &
  \multicolumn{1}{c}{0.279} &
  \multicolumn{1}{c|}{\textbf{0.308}} &
  \multicolumn{1}{c}{0.345} &
  \textbf{0.374} \\ 
\multicolumn{1}{c|}{StanfordCars} &
  \multicolumn{1}{c}{0.207} &
  \multicolumn{1}{c|}{\textbf{0.260}} &
  \multicolumn{1}{c}{0.100} &
  \multicolumn{1}{c|}{\textbf{0.139}} &
  \multicolumn{1}{c}{0.198} &
  \textbf{0.232} \\ \midrule
\multicolumn{1}{c|}{Average} &
  \multicolumn{1}{c}{0.328} &
  \multicolumn{1}{c|}{\textbf{0.365}} &
  \multicolumn{1}{c}{0.174} &
  \multicolumn{1}{c|}{\textbf{0.206}} &
  \multicolumn{1}{c}{0.244} &
  \textbf{0.276} \\ \bottomrule
\end{tabular}}
\vspace{2mm}
\caption{In this table, we understand the difference in correlations obtained using a Frobenius norm-based regularizer rather than a standard (non-regularized) method for the self-supervised pre-trained models (classification tasks).}
\label{table:reg_selfsupervised}
\end{table*}

\section{Additional Experiments}
In this section, we present the results of additional experiments we conducted on tasks like multi-domain/domain adaptation and semantic segmentation. We could not include details about these in the main paper due to space constraints. We start by describing the datasets used, models trained and then report the performance of OSBORN and other baselines on these tasks.

\noindent \textbf{Multi-domain/Domain Adaptation Dataset: DomainNet.}
We use the DomainNet \cite{domainnet} dataset to test OSBORN in a challenging multi-domain source pool setting. DomainNet consists of 6 domains (styles) namely, Clipart (C), Infograph (I), Painting (P), Quickdraw (Q), Real (R) and Sketch (S), each covering 345 common object categories. Out of these 6 domains, we evaluate the performance of OSBORN on 3 domains, that are Real (R), Infograph (I) and Clipart (C).

\noindent \textbf{Semantic Segmentation Datasets.} For conducting experiments on the semantic segmentation tasks, we choose 10 popularly used segmentation datasets, Pascal Context \cite{pascalcontext}, Pascal VOC \cite{pascalvoc}, COCO \cite{coco}, CamVid \cite{camvid}, CityScapes \cite{cityscapes}, India Driving Dataset (IDD) \cite{idd}, Berkeley Deep Drive (BDD) \cite{bdd}, Mapillary Vistas \cite{mapillary}, SUIM \cite{suim}, and SUN RGB-D \cite{sunrgbd}. Out of these 10 datasets, we evaluate and compare the performance of OSBORN with baselines on 3 target datasets, namely Camvid \cite{camvid}, CityScapes \cite{cityscapes}, and SUIM \cite{suim}.

\noindent \textbf{Model Architectures (DomainNet).} For building the source pool for the multi-domain experiments, we use the same models as we used in the fully-supervised pre-training setting i.e ResNet-101 \cite{resnet} and DenseNet-201 \cite{densenet}. Initially, both models are initialized with the fully-supervised ImageNet weights \cite{imagenet}, and we then train them on 6 domains of the DomainNet dataset.
\begin{table}[t]
\scalebox{0.68}{
\begin{tabular}{c|c|c|c|c}
\toprule
\textbf{Target Dataset} & \textbf{$\mathbf{W_D   + W_T + W_C}$}          & \textbf{$\mathbf{W_D   + W_T}$} & \textbf{$\mathbf{W_D   + W_C}$}                 & \textbf{$\mathbf{W_T   + W_C}$} \\ \midrule
OxfordIIITPets   & \textbf{0.666} & 0.539 & \underline{0.657}          & 0.622 \\ 
Oxford102Flowers & \textbf{0.455} & 0.418 & \underline{0.435}          & 0.405 \\ 
StanfordCars     & \textbf{0.548} & 0.524 & \underline{0.526}          & 0.512 \\ 
StanfordDogs     & \underline{0.604}          & 0.496 & \textbf{0.643} & 0.563 \\
Caltech101       & 0.486          & \underline{0.501} & \textbf{0.517} & 0.309 \\ \midrule
Average                 & {\underline{0.552}} & 0.496                 & {\textbf{0.556}} & 0.482\\
\bottomrule
\end{tabular}
}
\vspace{-1pt}
\caption{Comparison of pearson corr. scores. \textbf{Bold} represents highest score, \underline{Underline} represents second highest score.}
\vspace{-14pt}
\label{table:comparison_table1}
\end{table}
\noindent \textbf{Model Architectures (Semantic Segmentation).} For semantic segmentation, we employ a FCN \cite{semseglong} with ResNet-101 \cite{resnet} backbone, and a Lite R-ASPP with MobileNetv3 backbone \cite{lrasppmob} as our source model architectures. The capacity of the former is much higher than the latter thus bringing in diversity. We initialize these models with the COCO pre-trained weights \cite{coco} and then train them on the 10 datasets to include them in our source pool \footnote{Our baselines MS-LEEP and E-LEEP use custom proprietary model architectures that are not publicly available. We hence followed the authors' code and obtained guidelines from them in using their method on the models used in our work, and picked the best-performing hyperparameters for the results corresponding to their baselines shown in this work.}.
The rest of the experimental setup is the same as in Section 5 of the main paper.

\noindent \textbf{Results on DomainNet.} We compare OSBORN with the baseline metrics, i.e. MS-LEEP and E-LEEP, in terms of WKT, KT, and PCC. The correlation values are
reported in Tab. \ref{tab:domainadaptation}, averaged across three target domains.

\noindent \textbf{Results on Semantic Segmentation.}
\label{sec:sem_seg}
Apart from MS-LEEP and E-LEEP, the paper \cite{eleep} also proposes two additional metrics for predicting transferability on semantic segmentation tasks, which are namely IoU-EEP and SoftIoU-EEP. In this section, we compare the performance of OSBORN with these two metrics as well. We present the experimental results for the semantic segmentation tasks in Tab. \ref{tab:sem_seg}. As seen in the table, OSBORN improves transferability estimation when compared to previous works.

\section{Weighted version of OSBORN}
\vspace{-3pt}

While our results in the main paper showed that OSBORN outperforms existing state-of-the-art as is in its simple form, we conducted additional experiments to study the influence of weighting each component of OSBORN. Our studies showed that this can vary for different target datasets. Fig. \ref{fig:oxford102flower} shows these results for the Oxford102Flowers dataset. For target datasets such as OxfordIIITPets and Oxford102Flowers,  we observe that when we give more weightage to $W_D$ and subsequently to $W_T$, as compared to $W_C$, we achieve higher correlations. We believe this is because these datasets have some fine-grained characteristics in each class, which need more attention for classification. We believe that such a trend holds for transfer from coarse-grained to fine-grained datasets in general, while we observed a higher weightage to $W_T$ to provide more favorable results in other settings. 
As stated earlier, while not using any weighted coefficients for the terms in OSBORN is by itself beneficial, carefully picking weights for a specific target dataset can further improve performance. Learning these weighting coefficients would be an interesting direction for future work.


\vspace{-3mm}
\section{Visualization of Results}
In Fig \ref{fig:tsne}, we show t-SNE plots for data points of different classes in StanfordCars when passed through ensembles selected using various methods. We see that the ensemble selected by our method is better at segregating classes and closer to the Optimal as compared to MS-LEEP.

\section{Results with Frobenius Norm Regularizer}
\vspace{-3pt}
As mentioned in Section 3 of the paper, there is an option to use a regularizer to solve the OT problem. In this section, we investigate the usage of a Frobenius norm regularizer \cite{cgot},\cite{regot} in the experiments for image classification tasks (both fully-supervised and self-supervised pre-training settings). In Tab. \ref{table:regularizer}, we show the results of OSBORN with the use of a Frobenius norm regularizer (column: Frobenius) and without any regularizer (column: Standard) for the fully-supervised pre-training setting. We observe that both variations give comparable results on an average. In Tab. \ref{table:reg_selfsupervised}, we report the results for a self-supervised pre-training setting. In contrast to Tab. \ref{table:regularizer}, we observe that a Frobenius norm regularizer improves the performance substantially in this case. We hypothesize that self-supervised pre-training may make a model more conducive to the source datasets, which a Frobenius norm regularizer offsets while performing optimal transport computations by making them much easier and structured.

\section{Implementation Details}
Here, we describe miscellaneous details pertaining to the experiments reported in Section 5 of the main paper. 

\vspace{4pt}
\noindent \textbf{Optimal Transport Computation.}
We use the Python Optimal Transport Library (POT) to conduct our experiments. To keep the computational cost in check, we use a stratified representative set of $5000$ samples from the train sets to calculate the Wasserstein distance (since it involves extracting the source and target latent). This makes our method tractable and practical. We perform stratified sampling to follow a class-balanced approach, i.e. we sample the images inversely proportional to their class frequencies in the train set. Also, we standardize all three terms in OSBORN to avoid the dominance of any term on the others. 


\noindent \textbf{Input Data.} In the case of classification tasks, we resize the input images to $224 \times 224$, and in the case of semantic segmentation, we resize them to $256 \times 256$ (for computational feasibility). 
Since semantic segmentation is a dense prediction task with a high computational cost, 
we follow the strategy mentioned in \cite{eleep} and sample $1000$ pixels from an image. Considering class imbalances in semantic segmentation datasets, we sample pixels inversely proportionally to the frequency of their class categories in the target dataset, similar to what MS-LEEP performed in their experiments.

\section{Balancing Three Components of OSBORN}
To study further on importance of each component of OSBORN, we conducted experiments by completely removing one of the terms and reporting the resulting correlations/results in Table \ref{table:comparison_table1}. The analysis demonstrates, interestingly, that the inclusion of the $W_C$ term significantly improves correlation scores. Our metric includes domain difference ($W_D$) and task difference ($W_T$) besides the model cohesiveness term ($W_C$). While selecting models from the source pool, our objective is not just minimizing the model disagreement via ($W_C$) but the entire metric. Through the interplay and equilibrium of these three components, model collapse is prevented.



\end{document}